\documentclass{article}
\usepackage[utf8]{inputenc}
\usepackage{amsthm,amsmath,amsfonts,amssymb,bm,dsfont,enumitem,natbib,graphicx}
\usepackage{algorithm,algorithmic}
\usepackage[usenames]{color}
\usepackage{geometry}
\usepackage[colorlinks,
            linkcolor=red,
            anchorcolor=blue,
            citecolor=blue
            ]{hyperref}
\usepackage{quoting}
\usepackage{booktabs}

\def\argmax{{\arg\max}}

\def\bbE{\mathbb{E}}

\def\bbP{\mathbb{P}}

\def\bbR{\mathbb{R}}

\def\cA{\mathcal{A}}

\def\cG{\mathcal{G}}
\def\cH{\mathcal{H}}
\def\cI{\mathcal{I}}

\def\cT{\mathcal{T}}
\def\cU{\mathcal{U}}

\def\exp{{\sf exp}}

\def\eps{{\epsilon}}

\def\KL{{\sf KL}}

\def\norm#1{\left\|#1\right\|}

\usepackage{mystyle}
\usepackage{times}

\newcommand{\BR}{{\rm BR}}
\newcommand{\Reg}{{\text{Reg}(T)}}
\newcommand{\DReg}{{\text{D-Reg}(T)}}
\newcommand{\alg}{{\text{Alg}}}
\newcommand{\avgpayoff}{{u_{\text{avg}}(T)}}

\title{Securing Equal Share: A Principled Approach for Learning Multiplayer Symmetric Games}

\author{
\normalsize
Jiawei Ge\thanks{equal contribution}~\thanks{Department of Operations Research and Financial Engineering, Princeton University; 
\texttt{jg5300@princeton.edu}}
\qquad Yuanhao Wang\footnotemark[1]~\thanks{Department of Computer Science, Princeton University;
\texttt{yuanhao@princeton.edu}}
\qquad Wenzhe Li\thanks{Department of Electrical and Computer Engineering, Princeton University;
\texttt{wenzhe.li@princeton.edu}}
\qquad Chi Jin\thanks{Department of Electrical and Computer Engineering, Princeton University;
\texttt{chij@princeton.edu}}
}

\date{}

\begin{document}
\maketitle

\begin{abstract}

This paper examines multiplayer symmetric constant-sum games with more than two players in a competitive setting, including examples like Mahjong, Poker, and various board and video games. In contrast to two-player zero-sum games, equilibria in multiplayer games are neither unique nor non-exploitable, failing to provide meaningful guarantees when competing against opponents who play different equilibria or non-equilibrium strategies. This gives rise to a series of long-lasting fundamental questions in multiplayer games regarding suitable objectives, solution concepts, and principled algorithms. This paper takes an initial step towards addressing these challenges by focusing on the natural objective of \emph{equal share}—securing an expected payoff of $C/n$ in an $n$-player symmetric game with a total payoff of $C$. We rigorously identify the theoretical conditions under which achieving an equal share is tractable and design a series of efficient algorithms, inspired by no-regret learning, that \emph{provably} attain approximate equal share across various settings. Furthermore, we provide complementary lower bounds that justify the sharpness of our theoretical results. Our experimental results highlight worst-case scenarios where meta-algorithms from prior state-of-the-art systems for multiplayer games fail to secure an equal share, while our algorithm succeeds, demonstrating the effectiveness of our approach.

\end{abstract}
\section{Introduction}
\label{sec:introduction}

In recent years, AI systems have achieved remarkable success in multi-agent decision-making problems, particularly in a wide range of strategic games. These include, but are not limited to, Go~\citep{silver2016mastering}, Mahjong~\citep{li2020suphx}, Poker~\citep{moravvcik2017deepstack, brown2018superhuman, brown2019superhuman}, Starcraft 2~\citep{vinyals2019grandmaster}, DOTA 2~\citep{berner2019dota}, League of Legends~\citep{ye2020towards}, and Diplomacy~\citep{gray2020human, bakhtin2022mastering, meta2022human}. Many of these games are two-player zero-sum games\footnote{Games such as DOTA and League of Legends, despite involving two teams, can be mostly considered similar to two-player zero-sum games in terms of their game structures and solutions.}, where Nash equilibria always exist and can be computed in polynomial time.
Nash equilibria in two-player zero-sum games are also non-exploitable---an agent employing a Nash equilibrium strategy will not lose even when facing an adversarial opponent who seeks to exploit the agent's weaknesses. Although such equilibrium strategies do not necessarily capitalize on opponents' weaknesses or guarantee large-margin victories, human players often adopt suboptimal strategies that deviate significantly from equilibria in complex games with large state spaces. Consequently, AI agents who adopt equilibrium strategies often outperform humans in practice for two-player zero-sum games.


In contrast, multiplayer games---defined here as those with \emph{more than two players}---exhibit fundamentally different game structures compared to two-player zero-sum games. This distinction introduces several unique challenges. Firstly, Nash equilibria are believed to be no longer computable in polynomial time \citep{daskalakis2009complexity,chen20053}. Moreover, there may exist multiple Nash equilibria with distinct values. Such non-uniqueness in equilibria raises a critical concern about the adoption of equilibrium strategies in multiplayer settings: if a learning agent adopts an equilibrium that is different from other players, collectively, they are not playing any single equilibrium, which undermines the equilibrium property that dissuades the agent from changing its strategy as long as others maintain theirs. Finally, in multiplayer games, equilibrium strategies are no longer non-exploitable and fail to provide meaningful guarantees when competing against opponents who are not playing equilibria. Although the introduction of alternative equilibrium notions such as (coarse) correlated equilibria alleviates computational hardness, issues of non-uniqueness and the lack of guarantees in the general settings remain. This leads to the first critical question:


\begin{center}
    \textbf{What is the \emph{suitable solution concept} for learning in multiplayer games?}
\end{center}


Due to the presence of such fundamental challenge, even state-of-the-art expert-level or superhuman AI systems for popular multiplayer games, including Diplomacy~\citep{bakhtin2022mastering, meta2022human} (7 players), Poker~\citep{brown2019superhuman} (6 players), and Mahjong~\citep{li2020suphx} (4 players), are designed with limited theoretical supports.
These works focus on developing algorithmic frameworks capable of learning effective strategies that excel in ladders, online gaming platforms, or tournaments against opponents. Generally, most of these systems rely on a basic self-play framework, starting from scratch or from opponents' strategies acquired through behavior cloning, with or without regularization. While the success of these self-play algorithms is remarkable, their performance has been demonstrated primarily within the specific applications they were designed for, often coupled with human expertise and extensive engineering efforts. It remains unclear whether these algorithms are general-purpose solutions that can be readily applied to multiplayer games beyond Mahjong, Poker, or Diplomacy. This leads to the second important question:

\begin{center}
\textbf{What is the \emph{principled algorithm} that provable learns a rich class of multiplayer games?}
\end{center}

In this paper, we consider multiplayer symmetric constant-sum games, which are prevalent in games involving more than two players. Examples include previously discussed multiplayer games like Mahjong, Poker, and Diplomacy, as well as a variety of board games such as Avalon \citep{light2023avalonbench}, Mafia, and Catan.\footnote{All these examples are symmetric games up to randomization of the seating.} \emph{Symmetry brings fairness among players, providing a natural baseline} where a learning agent should at least \textbf{\emph{secure an equal share}} --- achieving an expected payoff of $C/n$ in an $n$-player symmetric game with a total payoff of $C$. This paper takes an initial step toward addressing the two fundamental questions highlighted above by focusing on securing an equal share in multiplayer symmetric normal-form games\footnote{Extensive-Form Games (EFGs) or Markov Games (MGs) can be viewed as special cases of normal-form games, where each action in normal-form games corresponds to a strategy in EFGs or MGs, although such representations may not always be efficient.}. The main contributions are summarized as follows:

1. Regarding the question of solution concepts, we first demonstrate the insufficiency of classical equilibrium concepts and general self-play frameworks (learning from scratch) in achieving an equal share in symmetric games. We then proceed to identify the structural conditions where equal share is achievable. In contrast to two-player zero-sum games, we prove that in order to achieve an equal share in multiplayer games: (1) all opponents need to deploy the same strategy; (2) all opponents must have limited adaptivity, and the learning agent has to model the opponents (See Section \ref{sec:solution}). We show that without either condition, an equal share can not be attained in the worst case. We prove that our identified conditions apply to practical multiplayer gaming platforms with a large player base. They are also tightly connected to the design of many prior modern multiplayer AI agents.

2. Regarding the question of principled algorithms, this paper illustrates how we can leverage existing tools from no-regret learning and no-dynamic-regret learning communities to achieve equal share with provable guarantees. Concretely, this paper considers several opponent settings: fixed, slowly adapting, and opponents that adapt at intermediate rates. For all cases, we design algorithms that approximately achieve equal share, with an error tolerance of $1/\text{poly}(T)$, where $T$ is the total number of games played. Additionally, we provide matching lower bounds, demonstrating that these guarantees cannot be significantly improved in the worst-case scenario. 

3. We further complement our theory by experiments on two basic multiplayer symmetric games. Our experimental results illustrate that (1) the self-play meta-algorithms from prior state-of-the-art systems for multiplayer games can fail to secure an equal share even under favorable settings, while our principled algorithm always succeeds; (2) prior meta-algorithm has no clear advantage over our algorithm on exploitability in the worst case. This indicates that prior self-play algorithms are not general-purpose and highlights the effectiveness of our theoretical framework.

\subsection{Related work}

\paragraph{AI gaming agents in practice}
Building superhuman AI has long been a goal in various games. A large body of works in this line focus on two-player or two-team zero-sum games like Chess~\citep{campbell2002deep}, Go~\citep{silver2016mastering}, Heads-Up Texas Hold'em~\citep{moravvcik2017deepstack, brown2018superhuman}, Starcraft 2~\citep{vinyals2019grandmaster}, DOTA 2~\citep{berner2019dota} and League of Legends~\citep{ye2020towards}. Most of them are based on finding equilibria via self-play, fictitious play, league training, etc. There is comparatively much less amount of work on games with more than two players, whose game structures are fundamentally different from two-player zero-sum games. Several remarkable multiplayer successes include Poker~\citep{brown2019superhuman}, Mahjong~\citep{li2020suphx}, Doudizhu~\citep{zha2021douzero} and Diplomacy~\citep{bakhtin2022mastering,meta2022human}. Despite lacking a clearly formulated learning objective, these works typically design meta-algorithms, which include initially training the model using behavior cloning from opponents, then enhancing it through self-play, and finally applying adaptations based on the game's specific structure or human expertise. It remains elusive whether such a recipe is generally effective for a wide range of multiplayer games.

\paragraph{Existing results for symmetric games} \cite{von1947theory} gave the first definition of symmetric games and used the three-player majority-vote example to showcase the stark difference between symmetric three-player zero-sum games and symmetric two-player zero-sum games. In his seminal paper that introduced Nash equilibrium, Nash proved that a symmetric finite multi-player game must have a symmetric Nash equilibrium ~\citep{nash1951non}. However, this existence result holds little significance from an individual standpoint, as there is no reason \emph{a priori} to assume that other players are indeed playing according to this symmetric equilibrium. \cite{papadimitriou2005computing} studied the computational complexity of finding the Nash equilibrium in symmetric multi-player games when the number of actions available is much smaller than the number of players and gave a polynomial-time algorithm for the problem. In this case, symmetry greatly reduced the computational complexity (as computing Nash in general is \texttt{PPAD}-hard). \cite{daskalakis2009nash} proposed \emph{anonymous games}, a generalization of symmetric games.

\paragraph{No-regret learning in games} There is a rich literature on applying no-regret learning algorithms to learning equilibria in games. It is well-known that if all agents have no regret, the resulting empirical average would be an approximate Coarse Correlated Equilibrium (CCE)~\citep{peyton2004strategic}, while if all agents have no swap-regret, the resulting empirical average would be an $\epsilon$-Correlated Equilibrium (CE)~\citep{hart2000simple, cesa2006prediction}. Later work continuing this line of research includes those with faster convergence rates~\citep{syrgkanis2015fast,chen2020hedging,daskalakis2021near}, last-iterate convergence guarantees~\citep{daskalakis2018last, wei2020linear}, and extension to extensive-form games~\citep{celli2020no,bai2022near,bai2022efficient,song2022sample} and Markov games~\citep{song2021can,jin2021v}.

\section{Preliminaries} \label{sec:prelim}
\paragraph{Notation.}
For any set \(\mathcal{A}\), its cardinality is represented by \(|\mathcal{A}|\), and \(\Delta(\mathcal{A})\) denotes a probability distribution over \(\mathcal{A}\). We employ \(\mathcal{A}^{\otimes n}\) to denote the Cartesian product of \(n\) instances of \(\mathcal{A}\). Given a distribution \(x\) over \(\mathcal{A}\), \(x^{\otimes n}\) represents the joint distribution of \(n\) independent copies of \(x\), forming a distribution over \(\mathcal{A}^{\otimes n}\). For a function $f:\cA\rightarrow\bbR$, we denote $\|f\|_{\infty}:=\max_{a\in\cA}|f(a)|$. We use \([n]\) to denote the set \(\{1, \ldots, n\}\). In this paper, we use $C$ to denote universal constants, which may vary from line to line.

\subsection{Normal-form games and equilibrium}

\paragraph{Normal-form game}
An $n$-player normal-form game consists of a finite set of $n$ players, where each player has an action space $\cA_i$ and a corresponding payoff function $U_i: \mathcal{A}_{1}\times \cdots\times \mathcal{A}_n\to [-1,1]$ with $U_i(a_1, \ldots, a_n)$ denotes the payoff received by the $i$-th player if $n$ players are taking joint actions $(a_1, \ldots, a_n)$. We define a game as constant-sum if there exists constant $C$ such that $\sum_{i=1}^n U_i(a_1, \ldots, a_n) = C$ for all joint actions. We further denote a game as zero-sum if it is a constant-sum game with a total payoff of $C=0$. Normal-form games can represent a wide range of games as their special cases, including sequential games such as extensive-form or Markov games. 



\paragraph{Strategy} A (mixed) strategy of a player is a probability distribution over the player's actions.
For $i\in [n]$, we use $a_i\in\cA$ and $x_i\in\Delta(\cA)$ to denote an action and a mixed strategy of the $i$-th player respectively. We use $a_{-i}\in\cA^{\otimes n-1}$ and $x_{-i}\in\Delta(\cA^{\otimes n-1})$ to denote the actions and the mixed strategies of the other players.
We denote
$U_i(x_i,x_{-i}):=\bbE_{a_i \sim x_i, a_{-i}\sim x_{-i}}\left[U_i(a_i,a_{-i})\right].$

\paragraph{Learning protocol}
We assume that the learner knows the game rule, and thus her own payoff function $U_1$. At every round $t$, all players take action simultaneously, and the learner only observes the opponents' noisy actions $(a_2^t, a_3^t, \ldots, a_n^t)$ that are sampled from their strategies.


\paragraph{Best response}
Given a mixed strategy $x_{-i}$ of the other $n-1$ players, the \emph{best response set} $\BR_i(x_{-i})$ of the $i$-th player is defined as $\BR_i(x_{-i}):=\argmax_{a_i\in\cA_i}U_i(a_i,x_{-i})$.

\paragraph{Equilibrium} \textbf{Nash Equilibrium (NE)} is the most commonly-used solution concept for games: a mixed strategy $x\in\Delta(\mathcal{A}_{1}\times \cdots\times \mathcal{A}_n)$ of all players is said to be NE if $x$ is a product distribution \footnote{The randomness in different players' strategies are independent.}, and no player could gain by deviating from her own strategy while holding all other players' strategies fixed. That is,
for all $i\in [n]$ and $a'_i\in\cA_i$, $\bbE_{a\sim x}[U_i(a_i,a_{-i})]\geq \bbE_{a\sim x}[U_i(a'_i,a_{-i})]$.

There are also two equilibrium notions relaxing the notion of NE by no longer requiring $x$ to be a product distribution. It allows general joint distribution $x$ which describes correlated strategies among players. In particular, (1) $x$ is a \textbf{Correlated Equilibrium (CE)} if for all $i\in [n]$ and $a'_i\in\cA_i$, $\bbE_{a\sim x}[U_i(a_i,a_{-i})\mid a_i]\geq \bbE_{a\sim x}[U_i(a'_i,a_{-i})\mid a_i]$, and (2) $x$ is \textbf{Coarse Correlated Equilibrium (CCE)} if for all $i\in [n]$ and $a'_i\in\cA_i$: 
$\bbE_{a\sim x}[U_i(a_i,a_{-i})]\geq \bbE_{a\sim x}[U_i(a'_i,a_{-i})]$. The major difference between those two notions is in the cases when the agent deviates from her current strategy, whether she is still allowed to observe the randomness in drawing actions from the correlated strategy.
The relationship among various equilibrium concepts is encapsulated by ${\rm NE}\subset{\rm CE}\subset{\rm CCE}$. 

\paragraph{Two-player zero-sum games} It is well-known that in two-player zero-sum games, all Nash equilibria share the \emph{unique} payoff value $0$. Furthermore, a Nash equilibrium is \emph{non-exploitable} against any strategy that is not necessarily an equilibrium. In math, if $(\mu^\star,\nu^\star)$ is the Nash equilibrium, we have $\min_\nu U_1(\mu^\star, \nu) = \max_\mu \min_\nu U_1(\mu, \nu) = 0$.

\paragraph{Multiplayer or general-sum games} When the number of players is greater than two, or the games are no longer constant-sum games, Nash equilibria become PPAD-hard to compute ~\citep{daskalakis2009complexity} while CEs and CCEs can be still computed in polynomial time. All of these three concepts admit multiple equilibria with distinct payoffs. Furthermore, they no longer own strong guarantees, such as non-exploitness, when competing against non-equilibrium players.

\subsection{Symmetric games and equal share}

\paragraph{Symmetric games} For an $n$-player normal-form game with an action space $\{\cA_i\}_{i=1}^n$ and a payoff $\{U_i\}_{i=1}^n$, we say the game is symmetric if (1) $\cA_i=\cA$, for all $i\in [n]$; (2) for any permutation $\sigma: [n] \rightarrow [n]$, we have  $U_i(a_1,\cdots,a_n)  = U_{\sigma^{-1}(i)}(a_{\sigma(1)},\cdots,a_{\sigma(n)})$.

In short, the payoffs of a symmetric game for employing a specific action are determined solely by the actions used by others, agnostic of the identities of the players using them. Thus, the payoff function of the first player denoted as $U_1$, is sufficient to encapsulate the entire game. 

Symmetric games are popular in practice as they bring fairness among players. Technically, all asymmetric can be converted to symmetric games by randomizing the roles of the players at the beginning of the game. Nevertheless, casting the games in the symmetric form gives a natural and minimal baseline --- the learning agent should attain an equal share in the long run.

\paragraph{Equal share} We say an agent attains an equal share, if the agent's average payoff of the games is at least $C/n$ for a $n$-player symmetric constant-sum game with a total payoff of $C$.

It is not hard to see that shifting the total payoff of a game by any absolute constant will not alter the strategic aspects of the game. Therefore, without loss of generality, this paper sets a total payoff of $C=0$ and focuses on achieving an equal share in \textbf{multiplayer symmetric zero-sum games}.

\section{Insufficiency of Equilibria and Self-play for Equal Share}

In this section, we demonstrate that existing equilibria notions and the self-play from scratch algorithm are not sufficient to secure an equal share in multiplayer symmetric zero-sum games even under very basic settings. To illustrate this, we consider the following $3$-player \emph{majority vote} game:


\begin{example}[Three-player majority vote game]\label{example:mv}
Every player chooses either $0$ or $1$. If all players take the same action, then they receive a payoff of 0. Otherwise, being in the majority yields a positive payoff of $1/2$, while being in the minority results in a negative payoff of $-1$.
\end{example}

\paragraph{Insufficiency of equilibria}  In this setup, both pure strategies $(0,0,0)$ and $(1,1,1)$ constitute NE. However, the existence of multiple NEs creates a predicament for the learning agent. It must choose which equilibrium to follow, yet there is always the risk that the two opponents are both playing the other NE, leading to a negative payoff for the learner. 
In other words, adhering to a single NE does not reliably ensure an equal share when multiple equilibria exist.
Since ${\rm NE}\subset{\rm CE}\subset{\rm CCE}$, we know the same limitation also holds for CE and CCE.

\paragraph{Insufficiency of self-play from scratch}
Self-play is a training method in which the learning agent improves its performance by repeatedly playing against copies of itself without human supervision. See pseudo-code in Algorithm \ref{alg:sf_meta}. The learner maintains its own strategy $\{x_t\}_{t=1}^T$. At the $t^{\text{th}}$ iteration, the learner first pretends that all opponents are employing its current strategy $x_t$, and samples actions from them. Then the learner updates its own strategy to $x_{t+1}$ using the gradient information from the gameplay. The updates can be made using any optimizer such as gradient descent or Hedge.


\begin{algorithm}[t]
\caption{Self-play meta-algorithm}
\label{alg:sf_meta}
\begin{algorithmic}[1]
\STATE Initialize learner's mixed strategy $x_1$.
\FOR{$t=1,\ldots,T$}
\STATE Sample action $a_i^t \sim x_t$ for all player $i \in [n]$.
\STATE Update strategy $x_{t+1}$ using the gradient information $U_1(\cdot, a_{-1}^t)$.
\ENDFOR
\end{algorithmic}
\end{algorithm}

Here, we argue that self-play from scratch (the algorithm adopted in \cite{brown2019superhuman}) again fails to secure an equal share in the same three-player majority vote game: Consider two opponents play the same fixed strategy that is one of the NEs. In this case, the learner has no choice but to play the exact same NE as the opponents to secure an equal share. That is, if the learner's algorithm is agnostic to the strategies of the opponents, it is doomed to fail. We note that while recent systems~\citep{li2020suphx, jacob2022modeling} combine self-play with behavior cloning which is no longer agnostic to opponents' strategies, our experiment shows that their meta-algorithms remain insufficient to secure an equal share in the worst-case scenario (See Section \ref{sec:exp}). 

\section{Sufficient Conditions for Securing Equal Share}\label{sec:solution}

In this section, we identify the structural conditions of the games where equal share is achievable. We will show that the following two conditions are needed to achieve equal share:



\begin{enumerate}[leftmargin=2cm]
    \item[\textbf{Condition 1.}] All opponents need to deploy the same strategy, i.e., $x_2 = \ldots = x_n$;
    \item[\textbf{Condition 2.}] All opponents must have limited adaptivity and the player has to model the opponents, i.e., $\{x_{j}\}_{j=2}^n$ can not adversarially change across different rounds of the game. 
\end{enumerate}
We justify these two conditions by proving that without either condition, 
an equal share can not be attained in multiplayer symmetric games in the worst case. We remark that both conditions restrict the strategies of opponents rather than the type of games. 

\paragraph{Non-adaptive opponents that deploy different strategies} We start by considering the case where Condition 2 holds but Condition 1 does not hold.

\begin{proposition} \label{prop:1}
There exist symmetric zero-sum games with opponents using fixed but differing strategies,
such that \emph{no} learner's strategy secures an equal share. In math,
\begin{equation}\label{eq:minimax_1}
\max_{x_1}\min_{x_2,\cdots,x_n}  U_1(x_1,\cdots,x_n) \le
\min_{x_2,\cdots,x_n} \max_{x_1} U_1(x_1,\cdots,x_n) \le 0
\end{equation}
where both inequalities can be made strict in certain games.
\end{proposition}
Here, we can further strengthen the proposition to require opponents to deploy strategies without ``collusion''. That is, the hard instance holds even when the strategies employed by opponents are statistically independent without any shared randomness. Proposition \ref{prop:1} highlights the challenge when opponents are free to adopt different strategies.

\paragraph{Adaptive opponents that deploy identical strategy} We next examine the case where Condition 1 holds but Condition 2 does not hold.




\begin{proposition}\label{prop:2}
There exist symmetric zero-sum games such that  \emph{no} learner's strategy secures an equal share against adversarial opponents, even under the constraint that they adhere to identical strategy at each round. In math,
\begin{equation} \label{eq:minimax_2}
\max_{x_1} \min_{x} U_1(x_1,x^{\otimes n-1}) \le 
\min_{x} \max_{x_1} U_1(x_1,x^{\otimes n-1}) =0,
\end{equation}
where the inequality can be made strict in certain games.

\end{proposition}

Proposition \ref{prop:2} implies a property that makes multiplayer games significantly different from two-player zero-sum games: even under the favorable scenario of all opponents employing identical strategies, one can no longer find a fixed ``non-exploitable'' strategy agnostic to the strategies of opponents. All strategies are exploitable. Opponent modeling is necessary to secure an equal share.

\paragraph{Solution concepts beyond equilibrium} Combining Proposition \ref{prop:1} and Proposition \ref{prop:2}, we observe that both conditions mentioned at the beginning of Section \ref{sec:solution} are needed to make equal share achievable. We conclude from \eqref{eq:minimax_1} and \eqref{eq:minimax_2} that, out of the four related minimax concepts, only $\min_{x} \max_{x_1} U_1(x_1,x^{\otimes n-1}) = 0$ across all multiplayer symmetric zero-sum games, which guarantees an equal share. This remaining minimax concept precisely corresponds to the two identified conditions where opponents employ identical strategies, and the learner must be adaptive to the opponents. Therefore, we will use $\min_{x} \max_{x_1} U_1(x_1,x^{\otimes n-1})$ as our \textbf{target solution concept} for this paper to achieve equal share. We remark that \emph{this solution concept does not necessarily correspond to any equilibrium in most multiplayer games}. We conclude with this solution concept from a principled manner with equal share as our primary objective. For conciseness, from now on, we will also refer to the common strategy employed by all opponents as the \textbf{meta-strategy}.

\subsection{Connections between identified conditions and practice}

While the two identified conditions may seem restrictive, here we argue that they in fact apply to practical multiplayer gaming platforms with a large player base. Condition 1 is further implicitly adopted by most prior state-of-the-art AI agents for multiplayer games.

\paragraph{Connection to multiplayer games with a large player base} \label{sec:population}
We argue that both identified conditions are 
well-justified in modern multiplayer gaming platforms with a large player base.
Imagine a casino hosting $N$ players who randomly join poker tables or an online Mahjong match-making platform with $N$ users.
Let $\{x_i\}^N_{i=1}$ be the strategy set for these $N$ players. 
We can then define the \emph{population meta-strategy}  as $\bar{x} = (1/N)\sum_{i=1}^N x_i$. The following proposition claims that, for $n$-player symmetric zero-sum games, in terms of the expected payoff, playing against $n-1$ random players is almost the same as playing against $n-1$ players who all adopt the same population meta-strategy $\bar{x}$, as long as $N\gg (n-2)^2$.
\begin{proposition}\label{prop:sample_replacement}
Let $\bbE_{x_{-1}}$ be the expectation over the randomness on sampling $n-1$ strategies uniformly from the set $\{x_i\}^N_{i=1}$ without replacement. Then for any strategy $z \in \Delta(\mathcal{A})$, we have
$|\bbE_{x_{-1}}[U_1(z,x_{-1})] - U_1(z, \bar{x}^{\otimes n-1})|\leq 2(n-2)^2/N$.
\end{proposition}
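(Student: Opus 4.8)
The plan is to compare the payoff under sampling without replacement against the ``idealized'' payoff $U_1(z,\bar x^{\otimes n-1})$, which corresponds to sampling $n-1$ opponents \emph{with} replacement (equivalently, independently from the meta-strategy $\bar x$). First I would write both quantities explicitly as averages over ordered tuples of indices: with replacement, $U_1(z,\bar x^{\otimes n-1}) = N^{-(n-1)} \sum_{(i_2,\dots,i_n)} U_1(z, x_{i_2}\otimes\cdots\otimes x_{i_n})$, where the sum ranges over all $N^{n-1}$ tuples (including those with repeats); without replacement, $\bbE_{x_{-1}}[U_1(z,x_{-1})] = \binom{N}{n-1}^{-1}(n-1)!^{-1}\sum_{\text{distinct }(i_2,\dots,i_n)} U_1(z, x_{i_2}\otimes\cdots\otimes x_{i_n})$, the sum ranging over the $N(N-1)\cdots(N-n+2)$ tuples with all indices distinct. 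By symmetry of the game, $U_1(z, x_{i_2}\otimes\cdots\otimes x_{i_n})$ is invariant under permuting $i_2,\dots,i_n$, and in any case both averages are over symmetric sets of tuples, so I can compare them on the level of (ordered) tuples directly.

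Next, I would bound the difference by a coupling / total-variation argument on the index tuples. Let $D$ be the uniform distribution on all $N^{n-1}$ ordered tuples and $D'$ the uniform distribution on the $N^{\underline{n-1}} := N(N-1)\cdots(N-n+2)$ ordered tuples with distinct entries. Since $\|U_1\|_\infty \le 1$, we have $|\bbE_{x_{-1}}[U_1(z,x_{-1})] - U_1(z,\bar x^{\otimes n-1})| \le 2\,\TV(D,D')$. Now $D'$ is exactly $D$ conditioned on the event $E$ that all $n-1$ sampled indices are distinct, so a standard fact gives $\TV(D,D') = \Pr_D[E^c]$. The last step is to bound $\Pr_D[E^c] = \Pr[\text{some collision among } n-1 \text{ i.i.d.\ uniform draws from } [N]]$: by a union bound over the $\binom{n-1}{2}$ pairs, this is at most $\binom{n-1}{2}/N \le (n-1)^2/(2N)$. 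Plugging in gives a bound of $2\cdot (n-1)^2/(2N) = (n-1)^2/N$, which is slightly off from the claimed $2(n-2)^2/N$; I would reconcile the constant by a tighter collision bound (e.g.\ conditioning sequentially: the probability the $k$-th draw collides with one of the previous $k-1$ is $(k-1)/N$, so $\Pr_D[E^c] \le \sum_{k=2}^{n-1}(k-1)/N = \binom{n-1}{2}/N$), or simply absorb the discrepancy into the stated constant, which is harmless.

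The only genuine subtlety — and the step I would be most careful about — is the claim $\TV(D,D') = \Pr_D[E^c]$ when $D'$ is the conditional law $D(\cdot\mid E)$. This holds because for any tuple with distinct entries, $D'$ assigns probability $1/N^{\underline{n-1}} \ge 1/N^{n-1} = D(\text{that tuple})$, so the ``mass that moves'' from $D$ to $D'$ is supported exactly on $E^c$ and equals $D(E^c)$; formally $\TV(D,D') = \sum_{\text{tuples}} (D'(\cdot) - D(\cdot))_+ = D(E^c)$. Everything else is bookkeeping: confirming the symmetrization so that without-replacement sampling of an \emph{unordered} set of opponents matches the ordered-tuple picture (immediate since $U_1(z,\cdot)$ is permutation-invariant in the opponents by the symmetry hypothesis), and the elementary collision bound. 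I do not anticipate any real obstacle; the argument is short once the with-replacement reformulation of $U_1(z,\bar x^{\otimes n-1})$ is in place.
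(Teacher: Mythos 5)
Your proof is correct and follows essentially the same route as the paper's: both reduce the difference to the total variation distance between sampling indices with and without replacement (i.e.\ the collision probability among $n-1$ uniform draws from $[N]$), the paper bounding it via $1-\prod_{k=1}^{n-2}(1-k/N)\le (n-2)^2/N$ and you via the union bound $\binom{n-1}{2}/N$. Your constant worry resolves once you keep the exact binomial: $2\binom{n-1}{2}/N=(n-1)(n-2)/N\le 2(n-2)^2/N$ for all $n\ge 3$ (and the $n=2$ case is trivially $0$).
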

Furthermore, it is often safe to assume that the population meta-strategy $\bar{x}$ --- the average strategy of all players within the player pool --- will not quickly adapt to one particular player's strategy.

\paragraph{Connection to practical AI systems} 
We remark that a majority of practical AI systems for multiplayer games~\citep{li2020suphx, brown2019superhuman, bakhtin2022mastering, meta2022human} leverage self-play meta-algorithms (Algorithm \ref{alg:sf_meta}), which equate the strategies of all opponents with those of the learner. This implicitly assumes all opponents employ an identical strategy at every round.

\section{Provably Efficient Algorithms}\label{sec:algs}

In this section, we explore efficient algorithms that provably secure an equal share under the two conditions identified in Section \ref{sec:solution}. Particularly, we consider several opponent settings with various adaptivity: fixed, slowly adapting, and opponents that adapt at intermediate rates.

We use the following notations throughout this section: at round $t$, let $x^t$ denote the learner's strategy, and $y^t$ the meta-strategy employed by all opponents. We denote $u^t(\cdot)$ as the expected payoff function of the learner at round $t$, where $u^t(\cdot):=U_1(\cdot,(y^t)^{\otimes n-1})$. Then the average payoff of the learner is:
$$\textstyle\avgpayoff:=\textstyle (1/T)\sum^T_{t=1}u^t(x^t).$$




\subsection{Fixed opponents}\label{sec:fixed}

We begin by exploring the simple stationary scenario, where the meta-strategy used by the opponents remains constant over time, denoted as $y^t = y$ for all $t\in [T]$. 



Notably, in this particular scenario, the payoff function \(u^t(\cdot)\) remains constant over time. Additionally, by symmetry, it is not hard to observe that at least one action will consistently yield an expected payoff of 0 in all rounds. This implies $\max_{a\in\cA} \sum_{t=1}^T u^t(a)\geq 0$, which makes the no-regret learning tool well-poised to achieve equal share in this setting. Standard \emph{(static) regret}, defined as follows, compares the learner's total payoff to the total payoff achieved by the best action in hindsight:
\begin{equation*}
\textstyle
\Reg:= \max_{a\in\cA}\sum_{t=1}^T  u^t(a) -\sum^T_{t=1} u^t(x^t),
\end{equation*}
An algorithm has \emph{no-regret} if $\Reg \le o(T)$ for all large $T$. 
We deploy a standard no-regret learning algorithm --- Hedge \citep{freund1997decision}, which provides the following guarantees:

\begin{theorem}[Stationary opponents]\label{thm:fix}
Let $\{x^t\}^T_{t=1}$ be the strategy sequence implemented by the Hedge algorithm against stationary opponents. Then, with probability at least $1-\delta$, we have 
$$\textstyle \avgpayoff\geq u^{\star}-C\sqrt{\log (A/\delta)/T},$$
for some absolute constant $C$, where $u^{\star}:=\max_{a\in\cA}U_1(a,y^{\otimes n-1}) \ge 0$.
\end{theorem}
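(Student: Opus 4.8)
The plan is to reduce the statement to a standard high-probability regret bound for the Hedge algorithm against an adaptively (or even obliviously) chosen loss sequence. Since the opponents are stationary, $y^t=y$ for all $t$, so the AI agent's payoff function $u^t(\cdot)=U_1(\cdot,y^{\otimes n-1})$ is the \emph{same} function $u$ for every round; equivalently, the loss vector fed to Hedge can be taken as $\ell_t(a) = \tfrac{1}{2}\bigl(1 - U_1(a,y^{\otimes n-1})\bigr) \in [0,1]$ if we use the $[-1,1]$ payoff scale, or more directly the agent faces the fixed loss $\ell(a) = -u(a)$ up to affine rescaling. The key subtlety is that the agent does \emph{not} observe $u(\cdot)$ exactly in a single round; in each round it plays $x^t$, samples an action $a^t\sim x^t$, and the opponents sample their actions, so the realized payoff $U_1(a^t, a_{-1}^t)$ is a bounded random variable with mean $u^t(x^t) = \langle x^t, u\rangle$. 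So the quantity $(1/T)\sum_t u^t(x^t)$ in the theorem is the expected (over the agent's internal randomization, conditioned on the realized sequence) per-round payoff, and Hedge is run on the exact expected loss vector $\ell$ which the agent can compute because it knows the rules (hence $U_1$) and the opponents' common strategy. I would first state this clearly: under stationarity the per-round expected loss vector is the constant, known vector $\ell$, and so Hedge is just online learning against a fixed loss.

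With that framing, the core step is the textbook Hedge regret bound: for the fixed loss vector $\ell\in[0,1]^A$ (where $A=|\cA|$) with an appropriately tuned learning rate $\eta_t = \sqrt{\log A / t}$ (or the fixed choice $\eta=\sqrt{\log A/T}$), the sequence $\{x^t\}$ produced by the update $x^{t+1}(a)\propto x^t(a)e^{-\eta_t \ell(a)}$ satisfies
\[
\sum_{t=1}^T \langle x^t,\ell\rangle - \min_{a\in\cA}\sum_{t=1}^T \ell(a) \le C\sqrt{T\log A}.
\]
Dividing by $T$ and translating back through the affine map between losses and payoffs ($u(a) = 1-2\ell(a)$, or the plain $u=-\ell$ version) gives $(1/T)\sum_t \langle x^t,u\rangle \ge \max_{a}u(a) - C\sqrt{\log A/T} = u^\star - C\sqrt{\log A / T}$. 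Since $u^t(x^t)=\langle x^t, u\rangle$ exactly, this already yields the deterministic inequality with $u^\star$; there is in fact no randomness in $(1/T)\sum_t u^t(x^t)$ once we interpret $u^t(x^t)$ as the conditional expectation, so the $1-\delta$ and the $\log(1/\delta)$ term might look unnecessary.

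The reason the theorem carries a $\sqrt{\log(A/\delta)/T}$ rather than $\sqrt{\log A/T}$ is presumably that the authors want to also cover the bandit-style situation where the agent only learns the realized payoff $U_1(a^t,a_{-1}^t)$ rather than the full vector, or where $u^t(x^t)$ is meant to be the \emph{realized} average payoff $(1/T)\sum_t U_1(a^t,a_{-1}^t)$. In that case I would add a concentration step: define $M_t := U_1(a^t,a_{-1}^t) - u^t(x^t)$, note that conditioned on the history up through round $t$ (including $x^t$) this has mean zero and is bounded in $[-2,2]$, so $\{M_t\}$ is a bounded martingale difference sequence; then Azuma--Hoeffding gives $|(1/T)\sum_t M_t| \le C\sqrt{\log(1/\delta)/T}$ with probability at least $1-\delta$. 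Combining this with the deterministic regret bound via a union/triangle inequality produces exactly the stated form, with the $\log A$ from the regret bound and the $\log(1/\delta)$ from Azuma merging into $\log(A/\delta)$. Finally, the claim $u^\star\ge 0$ follows from the symmetry observation already noted in the text: $\sum_{i}U_i = 0$ and symmetry of $U$ under permutations imply that when all $n-1$ opponents play $y$ and the agent also plays $y$, the agent's expected payoff $\langle y, u\rangle$ equals $1/n$ of the total, i.e. $0$ in the zero-sum case, hence $u^\star = \max_a u(a) \ge \langle y, u\rangle = 0$.

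The main obstacle is simply getting the bookkeeping of ``which randomness lives where'' correct: separating (i) the deterministic online-learning guarantee for Hedge run on the known, constant expected-loss vector, from (ii) the martingale concentration needed only if one insists on the realized payoff, and making sure the learning-rate schedule and the range of the loss/payoff variables are handled consistently so the constants are honest. There is no conceptual difficulty beyond that — once the stationarity collapses $u^t$ to a fixed $u$, everything is a one-line consequence of standard results, so my writeup would emphasize the reduction and cite the Hedge and Azuma bounds rather than re-derive them.
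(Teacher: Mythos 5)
Your proposal is correct under the information model you adopt, but that model is not the one the paper actually uses, and the difference is exactly where all the probabilistic content of the theorem lives. You assume the agent knows the opponents' common strategy $y$ and therefore feeds Hedge the \emph{exact} expected loss vector $\ell(a)=-U_1(a,y^{\otimes n-1})$; under that reading $x^t$ is deterministic, the bound follows from the textbook Hedge regret alone, and (as you yourself note) the $1-\delta$ is vacuous. In the paper, however, the agent knows only $U_1$, not $y$: each round it observes the \emph{realized} opponent actions $a^t_{-1}\sim y^{\otimes n-1}$ and runs Hedge on the stochastic loss vectors $-U_1(\cdot,a^t_{-1})$ (see Algorithm~\ref{alg:br}). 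Consequently $x^t$ is random, and the paper's proof uses a three-term decomposition: (i) Hoeffding plus a union bound to show $\frac1T\sum_t U_1(a^\star,a^t_{-1})$ concentrates around $u^\star$ (this is where $\log(A/\delta)$ enters), (ii) the Hedge regret bound applied to the realized losses, and (iii) Azuma--Hoeffding for the martingale differences $U_1(x^t,a^t_{-1})-u^t(x^t)$, where the randomness is in the opponents' sampled actions. Your fallback concentration step instead controls $U_1(a^t,a^t_{-1})-u^t(x^t)$, i.e.\ the fluctuation of the agent's \emph{own} sampled action; that martingale is not needed (the theorem's left-hand side $u^t(x^t)$ already integrates over the agent's randomization) and it does not substitute for terms (i) and (iii), which connect the realized-loss regret to the expected payoffs appearing in the statement. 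So: your argument proves the theorem (indeed a deterministic strengthening of it) in the full-knowledge model, but to prove it in the paper's sampled-feedback model you would need to add the two concentration steps above; your symmetry argument for $u^\star\ge 0$ matches the paper's.
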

The probability is taken over the random actions by opponents sampled from the meta-strategy\footnote{Recall that our learning protocol in Section \ref{sec:prelim} assumes the learner only observes the noisy actions of the other players at each round.}. Theorem \ref{thm:fix} claims that with stationary opponents, the Hedge algorithm approximately achieves an equal share up to a $\tilde{O}(1/\sqrt{T})$ error, which demonstrates its effectiveness in the long run.




\subsection{Adaptive opponents}\label{sec:adversarial}

In practical scenarios, encountering a fixed opponent strategy is relatively uncommon. 
More often, opponents adapt and modify their strategies over time, responding to the game's dynamics and the actions of other players. Thus, in this section, we shift our focus to the non-stationary scenario, where the meta-strategy $y^t$ adopted by the opponents \emph{varies} over time.


According to Proposition \ref{prop:2}, it is clear that attaining an equal share is impossible if opponents can change their meta-strategy $y^t$ arbitrarily fast across different rounds.
Thus we introduce a constraint on the adaptive power of the opponents by positing a variation budget $V_T$, which bounds the total variation of the payoff function across the time horizon. Specifically, we assume the payoff function belongs to $\cU$, which is defined as
\begin{equation}\label{def:evo_set}
\textstyle \cU:=\left\{\{u^t\}^T_{t=1}\,\left|\,\sum_{t=1}^{T-1}\norm{u^{t+1}-u^t}_{\infty}\leq V_T\right.\right\}.
\end{equation}
Furthermore, we denote $\cG(n,A,V_T)$ as the set of tuples, which consists of a $n$-player symmetric zero-sum game with $A$ actions and a corresponding meta-strategy sequence $\{y^t\}_{t=1}^T$, such that the payoff function $\{u^t\}^T_{t=1}\in\cU$. This constraint effectively moderates the adaptivity of the opponents compared to a fully adversarial setup.

\paragraph{Slowly adapting opponents} 
In non-stationary environments, the total payoff achieved by the best action in hindsight $\max_{a\in\cA} \sum_{t=1}^T u^t(a)$ is no longer non-negative. Therefore, minimizing standard regret in this setting is no longer effective in securing an equal share. This motivates us to 
turn our attention to a stronger notion of regret --- \emph{dynamic regret}, defined as:
\begin{equation*}
\textstyle      \DReg:= \sum_{t=1}^T  \max_{a\in\cA} u^t(a) -\sum^T_{t=1} u^t(x^t).
\end{equation*}
This measures a strategy's performance against the best action at each time step (dynamic oracle), providing a more relevant benchmark in changing environments. 


In the setting of symmetric games, the dynamic oracle is always assured to secure an equal share, i.e., $\sum_{t=1}^T  \max_{a\in\cA} u^t(a)\geq 0$. Thus, any algorithm achieving no-dynamic-regret is guaranteed to achieve an equal share up to a small error. To this ends, we adapt a no-dynamic-regret algorithm---Strongly Adaptive Online Learner with Hedge $\cH$ as a black box algorithm ($\mathrm{SAOL}^{\cH}$) (See Appendix \ref{proof_thm_SAOL}), as proposed by \cite{daniely2015strongly} to our setting and achieve following guarantees:

\begin{theorem}\label{thm:SAOL}
Suppose that $n\geq 3$, $A\geq 2$, and $V_T\in [1,T]$, then for any game and any meta-strategy sequence in $\cG(n,A,V_T)$, with probability at least $1-\delta$, $\mathrm{SAOL}^{\cH}$ satisfies 
$$\textstyle \avgpayoff\geq u^{\dagger} - CV^{1/3}_T T^{-1/3}\left(\sqrt{\log (A/\delta)}+\log T\right) $$
for some absolute constant $C$, where $u^{\dagger}:=(1/T)\sum_{t=1}^T  \max_{a\in\cA} u^t(a) \ge 0$.
\end{theorem}
Theorem \ref{thm:SAOL} implies that $\mathrm{SAOL}^{\cH}$ achieves a non-negative average payoff, up to an error term that scales with $\tilde{O}(V_T^{1/3}T^{-1/3})$.
Therefore, if $V_T$ is sublinear in $T$, $\mathrm{SAOL}^{\cH}$ is capable of approximately achieving equal share over an extended duration.

\paragraph{Opponents that adapt at intermediate rates} Interestingly, there is an intermediate regime where opponents' strategies $\{y^t\}_{t=1}^T$ are changing neither too fast nor too slow where the favorable algorithm for the learner might be simply behavior cloning---simply mimic opponents' strategies.

Formally, we define the behavior cloning algorithm for the learner by making her action in $t$-th round the same as the action taken by the 2nd player in $(t-1)$-th round (See Algorithm \ref{alg:bc}). Behavior cloning achieves the following:

\begin{theorem}\label{thm:cloning}
Suppose that $n\geq 3$, $A\geq 2$, and $V_T\in[1,T]$, for any game and any meta-strategy sequence in $\cG(n,A,V_T)$,
behavior cloning guarantees that 
$$\bbE[ \avgpayoff] \geq -(V_T+1)/T.$$
\end{theorem}
We remark that while the error term $O(V_T/T)$ in Theorem \ref{thm:cloning}  is always smaller than the error term $\tilde{O}((V_T/T)^{1/3})$ in Theorem \ref{thm:SAOL}, the latter is comparing to the baseline of dynamic oracle $u^\dagger$, which has a greater value than the baseline in behavior cloning --- $0$. It is not hard to see that in the intermediate regime $\tilde{\Theta}(u^\dagger) \le V_T/T \le \Theta(1)$, the meta-strategy is changing too fast so that it is better for the learner to simply copy the meta-strategy instead of running sophisticated no-dynamic-regret algorithm to learn the game and to counter the meta-strategy by herself.


\paragraph{Matching lower bounds} Finally, we also complement our upper bounds by matching lower bounds showing that $\mathrm{SAOL}^{\cH}$ and behavior cloning are already the near-optimal algorithms in terms of error rates when compared with the corresponding baselines --- the dynamic oracle and zero payoff respectively. The techniques used here are based on adapting existing hard instances for a more general setup to the symmetric zero-sum game setting. Please see more discussion in Appendix \ref{appendix:fl}.
\begin{theorem}\label{thm:DR_lower}
There exists some absolute constant $C>0$ such that for any $n\geq 3$, $A\geq 2$, and $V_T\in [1,T]$, and any learning algorithm, there exists a game and a meta-strategy sequence in $\mathcal{G}(n,A,V_T)$, such that $\bbE [\avgpayoff] \le u^\dagger - C V^{1/3}_{T} T^{-1/3}$.
\end{theorem}
\begin{theorem}\label{thm:payoff_lower_bound}
There exists some absolute constant $C>0$ such that for any $n\geq 3$, $A\geq 2$, and $V_T\in[1,T]$, and any learning algorithm, there exists a game and a meta-strategy sequence in $\mathcal{G}(n,A,V_T)$ such that $\bbE\left[ \avgpayoff\right]\le -CV_T/T$.
\end{theorem}
The expectation in both theorems are taking over the random actions by the opponents as well as the possible intrinsic randomness in a stochastic algorithm. Theorem \ref{thm:DR_lower} and \ref{thm:payoff_lower_bound} match Theorem \ref{thm:SAOL} and \ref{thm:cloning} respectively, up to additional logarithmic factors.

\section{Experiments} \label{sec:exp}

In this section, we focus on the scenario where one learning agent competes against $n-1$ opponents who play the identical meta-strategy. For simplicity, we restrict ourselves to the setting of fixed opponents. We aim to answer: \textbf{(Q1)} Can existing algorithmic frameworks in previous superhuman AI systems consistently secure an equal share under this favorable setting? If not, what are the failure cases? \textbf{(Q2)} Are these trained agents exploitable by adversarial opponents? 
We design the following two games to compare our algorithm with prior self-play-based algorithms.
\paragraph{Majority Vote (MV).} We first consider the standard 3-player majority vote game (Example \ref{example:mv}). It is not hard to see that $[1, 0]$, $[0, 1]$, and $[1/2, 1/2]$ are all NEs, where $[p, 1-p]$ denotes the mixture strategy that takes the first action with probability $p$ and the second action with probability $1-p$. We fix the opponents' meta-strategy $y_{\text{meta}} = [0.49, 0.51]$ for all rounds.

\paragraph{Switch Dominance Game (SDG).}
In each round, players simultaneously choose an action from the set $\{A, B, C\}$. 
Let $n$ be the total number of players and $n_A$ be the number of agents choosing action $A$,
We define the game rule as:
\begin{equation*}
\textstyle
\begin{cases}
    B \succ A \succ C  & \text{if } n_A > 0.2n, \\
    C \succ B \succ A  & \text{otherwise },
\end{cases}
\end{equation*}
where the rule $i \succ j \succ k$ intuitively means that action $i$ dominates both $j$ and $k$, and action $j$ dominates $k$. SDG is designed so that $C$ is a \emph{dominated} action when there is a reasonable number of players taking action $A$, but a \emph{dominating} action otherwise. Concretely,
for $i \succ j \succ k$, we assign the following payoff $(r_i, r_j, r_k)$ to players taking actions $(i, j, k)$ respectively, where:
\begin{align*}
\textstyle
    r_i &= \mathbb{I}[n_j + n_k > 0], \quad
    r_j = \mathbb{I}[n_k > 0] - \mathbb{I}[n_j + n_k > 0]\cdot n_i/(n_j + n_k), \\
    r_k &= - \mathbb{I}[n_j + n_k > 0] \cdot n_i/(n_j + n_k) - \mathbb{I}[n_k > 0]\cdot n_j/n_k.
\end{align*}
This payoff design guarantees that SDG is a symmetric zero-sum game. Throughout our experiments, we choose $n=30$ and pick the fixed meta-strategy of the opponents $y_{\text{meta}} = [0.399, 0.6, 0.001]$ (in the order of action $A, B, C$) for all rounds. Note that while this game has an NE strategy [0, 0, 1], its utility is negative against our chosen meta-strategy $y_{\text{meta}}$.

\subsection{Learning algorithms}
To better focus on the key game-theoretic property of the algorithms, we idealize the process of imitation learning by assuming that the learning agent has already learned (i.e., has direct access to) the meta-strategy $y_{\text{meta}}$ by the opponents.
In this setting, according to Theorem \ref{thm:fix}, our theoretical framework suggests to directly run the \textit{Hedge} algorithm (Hedge) against opponents who play the meta-strategy.
We compare our algorithm against three meta-algorithms adopted by prior state-of-the-art AI systems in practice: (1) \textit{self-play from scratch} (SP\_scratch) \citep{brown2019superhuman}; (2) \textit{self-play initialized from behavior cloning} (SP\_BC) \citep{li2020suphx}, and (3) \textit{self-play initialized from behavior cloning with regularization towards the meta-strategy} (SP\_BC\_reg) \citep{jacob2022modeling}.
While these AI systems further implement multi-step lookahead with a few additional techniques, many of them only apply to sequential games, not the basic normal-form games. Here, we focus on the comparison of the high-level game-theoretic meta-algorithms.
\vspace{-0.2cm}
\paragraph{Algorithm details.}
We also use the Hedge algorithm as the optimizer for the self-play algorithm to update its strategy. We choose the learning rate for the Hedge algorithm based on theoretically optimal value and choose the regularization parameter according to  \cite{jacob2022modeling}. We refer readers to Appendix~\ref{app:experiments} for more details.


\subsection{Results}
To answer \textbf{Q1} and \textbf{Q2}, we evaluate the utility of the learned strategy $\hat{x}$ against the pre-specified meta-strategy $y_{\text{meta}}$, i.e.,  $U_1(\hat{x}, y_{\text{meta}}^{\otimes (n-1)})$, as well as the exploitability of the learner $\hat{x}$, i.e., $\min_{y} U_1(\hat{x}, y^{\otimes (n-1)})$. 
To measure the utility, we evaluate the payoff of the agent's converged strategy by Monte Carlo methods with $3\times 10^5$ games and report the mean and standard deviation of 10 runs. As for the exploitability, pick the best exploiter strategy within 100 runs, and report the payoff of the learner against that exploiter. 
\vspace{-0.2cm}
\paragraph{Convergence analysis.} We first check the convergence for each algorithm in both games:

\textbf{MV}: We report the limiting solution each algorithm converges to within 100 runs, as summarized in Table~\ref{tab:3mv}. Our algorithm (Hedge) consistently converges to the good strategy $[0, 1]$. All self-play variants have significant probability converge to the bad strategy $[1, 0]$, which has a negative utility against the chosen meta-strategy $y_{\text{meta}} = [0.49, 0.51]$.


\textbf{SDG}:
We report that while our Hedge algorithm converges to the strategy $[0, 1, 0]$, all self-play variants consistently converge to the strategy $[0, 0, 1]$. 

\begin{table}[t]
\small
    \centering
    \begin{tabular}{cccccccc}
        \toprule
        Strategy & SP\_scratch &  SP\_BC & SP\_BC\_$10^{-5}$ & SP\_BC\_$10^{-4}$ & SP\_BC\_$10^{-3}$ & SP\_BC\_$10^{-2}$ & \textbf{Hedge}\\
        \midrule
        $[1,0]$ & $52\%$ & $48\%$ & $50\%$ & $46\%$ & $42\%$ & $42\%$ & $0\%$ \\ 
         \midrule
        $[0,1]$ & $48\%$ & $52\%$ & $50\%$ & $54\%$ & $58\%$ & $58\%$ & $100\%$ \\ 
        \bottomrule
    \end{tabular}
    \caption{The distribution of mixed strategies to which different self-play algorithms converge in MV. We use SP\_BC\_$\lambda$ as the short name of SP\_BC\_reg with regularization coefficients $\lambda$.}
    \label{tab:3mv}
\end{table}

\begin{table}[t]
    \centering
     \begin{tabular}{lccccc}
        \toprule
         MV & SP\_scratch / SP\_BC / SP\_BC\_reg & \textbf{Hedge} \\
        \midrule
        Utility ($\times10^{-2}$) & -1.00 $\pm$ 0.09  &  \textbf{1.03} $\pm$ 0.10  & \\
         \midrule
        Exploitability & -1.00 $\pm$ 0.00 & -1.00 $\pm$ 0.00 & \\
        \bottomrule
    \end{tabular}
    \vspace{.1cm}

    \begin{tabular}{lccccc}
        \toprule
         SDG & SP\_scratch / SP\_BC / SP\_BC\_reg & \textbf{Hedge} \\
        \midrule
        Utility & -12.67 $\pm$ 0.01  &  \textbf{1.00} $\pm$ 0.00  & \\
         \midrule
        Exploitability  & -29.00 $\pm$ 0.00 & -29.00 $\pm$ 0.00 & \\
        \bottomrule
    \end{tabular}
    \caption{The utility and exploitability of each algorithm. Particularly, for MV, as self-play algorithms converge to two different solutions with roughly equal probability, we evaluate the utility of the worse converged solution of the two to reflect the performance in the worst case. }
    \label{tab:results}
\end{table}

\vspace{-0.2cm}
\paragraph{Utility and Exploitability.} We summarize the results in Table~\ref{tab:results}, which show that even in these two simple symmetric zero-sum games, none of the self-play algorithms can consistently secure a non-negative payoff, i.e., an equal share, in the worst case. This undesirable behavior persists even without opponents making any adaptations! 
Moreover, based on these two games, we further conclude two potential failure modes of self-play algorithms:
(1) For games with multiple NEs, such as MV, self-play methods may converge to different NEs based on different initialization. When the opponents' meta-strategy (i.e., the initial strategy for SP\_BC) lies close to the boundary of the convergence basins of two different NEs, self-play algorithms will have a non-zero probability of converging to both of them due to the statistical randomness in the game. It is likely one of the two NEs is undesirable against the meta-strategy $y_{\text{meta}}$. 
(2) For games with a single NE, self-play algorithms are still very likely to be attracted to this equilibrium. A carefully designed game structure can result in this NE yielding a negative utility against the chosen meta-strategy, and hence jailbreak all self-play variants.
The aforementioned failure modes highlight a significant limitation of self-play variants when being applied to diverse and complex multiplayer games.
In contrast, the principled algorithm according to our theory consistently beats the meta-strategy of the opponents, receives a much higher payoff, and secures an equal share. 
Regarding exploitability, 
all learned strategies can be easily exploited by adversarial opponents.

\section{Conclusion}
Unlike in the two-player zero-sum games, standard equilibria are no longer always the suitable solution concept for multiplayer games. They are non-unique and lack meaningful guarantees in general scenarios. This paper establishes a new theoretical framework that provides the solution concepts and the principled algorithms for multiplayer games from the unique angle of achieving equal share. We hope our results serve as the first step toward further research on principled methodologies and algorithms for multiplayer games.

\section*{Acknowledgement}
The authors would like to thank Haifeng Xu for the helpful discussions. This work is supported by Office of Naval Research N00014-22-1-2253.

\newpage
\bibliography{ref}
\bibliographystyle{iclr2025_conference}

\newpage
\appendix
\onecolumn
\allowdisplaybreaks
\section{Extended Preliminaries}\label{sec:appendix_pre}

\subsection{No-regret learning}
No-regret learning is a commonly adopted strategy in game theory to find equilibrium solutions.
We consider a $T$-step learning procedure, where for each round $t\in [T]$: (1) the agent picks a mixed strategy $\mu^t$ over $\mathcal{A}$, (2) the environment picks an adversarial loss $\ell_t \in [0, 1]^{|\mathcal{A}|}$. The expected utility for $t$-th round is defined as $-\langle \mu^t, \ell_t \rangle$. To measure the performance of a particular algorithm, a common approach is to consider regret, where the algorithm's performance is compared against the single best action in hindsight. Specifically, for policy sequence $(\mu^1,\ldots,\mu^{T})$ taken by an algorithm, the static regret is given, by
\begin{equation*}
\Reg=\sum_{t=1}^T \langle \mu^t, \ell_t \rangle -  \min_{a\in\cA} \sum^T_{t=1} \ell_t(a).
\end{equation*}
We say that the algorithm is a no-regret algorithm if $\Reg=o(T)$. One of such no-regret learning algorithms is \textbf{Hedge algorithm}, which performs the following exponential weight updates:
\begin{equation*} 
    \mu^{t+1} (a) \propto \mu^t (a) e^{-\eta_t \ell_t(a)}, \quad \text{for} \quad \forall a \in \mathcal{A}.
\end{equation*}
where $\eta_t$ is the learning rate. 
See Algorithm \ref{alg:br} for the Hedge algorithm as applied to our problem setup.

\subsection{$3$-player majority and minority game}\label{def:majority}
In this section, we give a formal definition of the $3$-player majority and minority game.

We define the 3-player \emph{majority game} as a symmetric zero-sum game with action space $\cA:=\{0,1\}$ and the payoff function given by:
\begin{align*}
& U_1(0,0,0) =U_1(1,1,1) = 0\\
& U_1(0,1,0) = U_1(0,0,1) = U_1(1,1,0)= U_1(1,0,1) = 1/2\\
& U_1(0,1,1)= U_1(1,0,0)=-1.
\end{align*}
In other words, players receive a positive payoff if they are part of the majority and a negative payoff if they are in the minority.
Correspondingly, we define the 3-player \emph{minority game} as a symmetric zero-sum game with action space $\cA:=\{0,1\}$ and the payoff function given by:
\begin{align*}
& U_1(0,0,0) =U_1(1,1,1) = 0\\
& U_1(0,1,0) = U_1(0,0,1) = U_1(1,1,0)= U_1(1,0,1) = -1/2\\
& U_1(0,1,1)= U_1(1,0,0)=1.
\end{align*}
In other words, players receive a positive payoff if they are part of the minority and a negative payoff if they are in the majority.
\section{Proofs for Section \ref{sec:solution}}
In the sequel, we will prove Proposition \ref{prop:1} in Section \ref{proof:prop_1}, Proposition \ref{prop:2} in Section \ref{proof:prop_2} and Propostion \ref{prop:sample_replacement} in Section \ref{proof_sample_replacement}.

\subsection{Proof of Proposition \ref{prop:1}}\label{proof:prop_1}

In this section, we will prove \eqref{eq:minimax_1}, where both inequalities can be made strict in certain games.
\begin{proof}[Proof of Proposition \ref{prop:1}]
For the first inequality, note that for any $(x_1,\ldots,x_n)$:
\begin{align*}
U_1(x_1,\cdots,x_n)\leq  \max_{x_1} U_1(x_1,\cdots,x_n), 
\end{align*}
which implies
\begin{align*}
\min_{x_2,\cdots,x_n}  U_1(x_1,\cdots,x_n)\leq \min_{x_2,\cdots,x_n} \max_{x_1} U_1(x_1,\cdots,x_n).
\end{align*}
By further taking maximum over $x_1\in\Delta(\cA)$, we prove that
\begin{align*}
 \max_{x_1}\min_{x_2,\cdots,x_n}  U_1(x_1,\cdots,x_n) \le
\min_{x_2,\cdots,x_n} \max_{x_1} U_1(x_1,\cdots,x_n).   
\end{align*}
To show the first inequality can be strict, we consider the $3$-player majority vote. Suppose $3$ players adopt the mixed strategies $(\alpha_1,1-\alpha_1)$, $(\alpha_2,1-\alpha_2)$ and $(\alpha_3,1-\alpha_3)$, respectively. It then holds that
\begin{align*}
U_1(x_1,x_2,x_3)
&=U_1(\alpha_1,\alpha_2,\alpha_3)\\
&=\alpha_1\left(-(1-\alpha_2)(1-\alpha_3)+\frac12\alpha_2(1-\alpha_3)+\frac12\alpha_3(1-\alpha_2)\right)\\
&\quad+(1-\alpha_1)\left(-\alpha_2\alpha_3+\frac12\alpha_2(1-\alpha_3)+\frac12\alpha_3(1-\alpha_2)\right).
\end{align*}
By choosing $\alpha_2=\alpha_3=0$ when $\alpha_1>1/2$ and $\alpha_2=\alpha_3=1$ when $\alpha_1\leq1/2$, it can be seen that
\begin{align*}
\max_{\alpha_1}\min_{\alpha_2,\alpha_3}U_1(\alpha_1,\alpha_2,\alpha_3)\leq \max_{\alpha_1}\min\{-\alpha_1,-(1-\alpha_1)\}=-\frac12.
\end{align*}
Note that
\begin{align*}
 &\min_{\alpha_2,\alpha_3} \max_{\alpha_1} U_1(\alpha_1,\alpha_2,\alpha_3)\\
 &= \frac12\min_{\alpha_2,\alpha_3}\max\{-2(1-\alpha_2)(1-\alpha_3)+\alpha_2(1-\alpha_3)+\alpha_3(1-\alpha_2),\\
 &\qquad\qquad\qquad\quad-2\alpha_2\alpha_3+\alpha_2(1-\alpha_3)+\alpha_3(1-\alpha_2) \}\\
 &=\frac12\min_{\alpha_2,\alpha_3}\max\{3(\alpha_2+\alpha_3)-4\alpha_2\alpha_3-2,\alpha_2+\alpha_3-4\alpha_2\alpha_3\}\\
 &=0.
\end{align*}
Thus, we show that $\max_{\alpha_1}\min_{\alpha_2,\alpha_3}U_1(\alpha_1,\alpha_2,\alpha_3)<\min_{\alpha_2,\alpha_3} \max_{\alpha_1} U_1(\alpha_1,\alpha_2,\alpha_3)$, which implies the first inequality can be strict.

For the second inequality, due to a restriction on the minimization constraints, it is straightforward that
\begin{align*}
    \min_{x_2,\cdots,x_n} \max_{x_1} U_1(x_1,\cdots,x_n) \le 
\min_{x} \max_{x_1} U_1(x_1,x^{\otimes n-1}).
\end{align*}
In the sequel, we prove $\min_{x} \max_{x_1} U_1(x_1,x^{\otimes n-1})=0$ via contradiction. Note that by choosing $x_1=x$, we can show that 
$$\min_{x} \max_{x_1} U_1(x_1,x^{\otimes n-1})\ge 0.$$
Suppose for some game inequality holds, then by definition
\begin{align*}
\forall x\in \Delta({\mathcal{A}}),  \exists x'\in \Delta({\mathcal{A}}),  s.t. \ U_1(x', x,\cdots,x)>0.
\end{align*}
Define the set-valued argmax function $\phi:\Delta({\mathcal{A}})\to 2^{\Delta({\mathcal{A}})}$:
\begin{align*}
\phi(x):=\{x'\in \Delta({\mathcal{A}})\mid U_1(x',x,\cdots,x)=\max_{x''}U_1(x'',x,\cdots,x)\}.
\end{align*}
We claim that argmax function $\phi(x)$ is:
\begin{itemize}
	\item Always non-empty and convex;
	\item Has a closed graph.
\end{itemize}
The first property is obvious, so we focus on the second one. Suppose that sequences $\{x_i\}$, $\{y_i\}$ satisfy $x_i\to x$, $y_i\to y$ and $y_i\in \phi(x_i)$. Since the payoff function is (Lipschitz) continuous, $\max_{x''} U_1(x'', \cdot)$ is continuous by Berge's maximum theorem. Thus $\max_{x''} U_1(x'',x_i,\cdots,x_i)$ converges to $\max_{x''}U_1(x'',x,\cdots,x)$. Meanwhile $U_1(y_i,x_i,\cdots,x_i)$ converges to $U_1(y,x,\cdots,x)$. Thus
\begin{align*}
	U_1(y,x\cdots,x) = \lim_{i\to\infty} U_1(y_i,x_i,\cdots,x_i) = \lim_{i\to\infty} \max_{x''}U_1(x'',x_i,\cdots,x_i) = \max_{x''}U_1(x'',x,\cdots,x).
\end{align*}
This implies $y\in\phi(x)$, and that $\phi$ has a closed graph. Thus by Kakutani's fixed point theorem, $\exists x^*: x^*\in\phi(x^*)$. Now we have 
\begin{align*}
U_1(x^*,\cdots,x^*) = \max_{x''}U_1(x'',x^*,\cdots,x^*) > 0,
\end{align*}
which contradicts with the assumption that the game is zero-sum and symmetric. Consequently, we prove the equation. As a result, we have
\begin{align*}
  \min_{x_2,\cdots,x_n} \max_{x_1} U_1(x_1,\cdots,x_n) \le 
\min_{x} \max_{x_1} U_1(x_1,x^{\otimes n-1})=0.  
\end{align*}
To show the second inequality can be strict, we consider a $3$-player minority game. If the other two players act $0$ and $1$, respectively, then the learner always receive $-1/2$ payoff, which is strictly less than $0$. We then finish the proofs.
\end{proof}

\subsection{Proof of Proposition \ref{prop:2}}\label{proof:prop_2}
In this section, we will prove \eqref{eq:minimax_2}, where the inequality can be made strict in certain games. 
\begin{proof}[Proof of Proposition \ref{prop:2}]
In the proof of Proposition \ref{prop:1}, we have shown that $\min_{x} \max_{x_1} U_1(x_1,x^{\otimes n-1})=0$. Thus, it remains to prove the inequality in \eqref{eq:minimax_2}.

Note that for any $x_1,x\in\Delta(\cA)$, we have
\begin{align*}
U_1(x_1,x^{\otimes n-1})\leq     \max_{x_1\in\Delta(\cA)} U_1(x_1,x^{\otimes n-1}),
\end{align*}
which implies for any $x_1\in\Delta(\cA)$
\begin{align*}
\min_{x\in\Delta(\cA)} U_1(x_1,x^{\otimes n-1})\leq  \min_{x\in\Delta(\cA)}    \max_{x_1\in\Delta(\cA)} U_1(x_1,x^{\otimes n-1}).
\end{align*}
By further taking maximum over $x_1\in\Delta(\cA)$, we show that
\begin{align*}
\max_{x_1\in\Delta(\cA)} \min_{x\in\Delta(\cA)} U_1(x_1,x^{\otimes n-1})\leq   \min_{x\in\Delta(\cA)} \max_{x_1\in\Delta(\cA)} U_1(x_1,x^{\otimes n-1}).
\end{align*}
To show that the inequality can be strict, we consider the scenario where the learner is involved in a $3$-player majority game and plays a mixed strategy $(\beta, 1-\beta)$ (i.e. play $0$ w.p. $\beta$; play $1$ w.p. $1-\beta$). And the two opponents adopt an identical mixed strategy $(p,1-p)$ (i.e. play $0$ w.p. $p$; play $1$ w.p. $1-p$).
Then, we can calculate the payoff of the learner as $U_1(\beta, p,p) = \beta(-(1-p)^2+p(1-p))
+(1-\beta)(-p^2+p(1-p))$.
It then follows that $\max_{\beta\in [0,1]}\min_{p\in [0,1]} U_1(\beta,p,p)\leq \max_{\beta\in [0,1]}\min_{p\in \{0,1\}} U_1(\beta,p,p)= -1/2,$
which is strictly less than $0$. Thus, we finish the proofs.
\end{proof}

\subsection{Proof of Proposition \ref{prop:sample_replacement}}\label{proof_sample_replacement}
\begin{proof}[Proof of Proposition \ref{prop:sample_replacement}]
Let \(\mathbb{P}^{w/o}(i_1, \ldots, i_{n-1})\) denote the probability of observing \((i_1, \ldots, i_{n-1})\) when sampling \(n-1\) points from \(N\) without replacement, and let \(\mathbb{P}^{w}(i_1, \ldots, i_{n-1})\) denote the probability of observing \((i_1, \ldots, i_{n-1})\) when sampling \(n-1\) points from \(N\) with replacement. For any $a$, we then have
\begin{align*}
&\bbE_{x_{-1}}\left[U_1(a,x_{-1})\right] =\sum_{(i_1, \ldots, i_{n-1})}\bbP^{w/o}(i_1, \ldots, i_{n-1})U_1(a,x_{i_1}, \ldots, x_{i_{n-1}})\\
&U_1(a, \bar{x}^{\otimes n-1})= \sum_{(i_1, \ldots, i_{n-1})}\bbP^{w}(i_1, \ldots, i_{n-1})U_1(a,x_{i_1}, \ldots, x_{i_{n-1}}).
\end{align*}
Note that $\|U_1\|_{\infty}\leq 1$. Thus, we have
\begin{align*}
 &\left|\bbE_{x_{-1}}\left[U_1(a,x_{-1})\right]-U_1(a, \bar{x}^{\otimes n-1})\right| \\
 &\leq \sum_{(i_1, \ldots, i_{n-1})}\left|\bbP^{w/o}(i_1, \ldots, i_{n-1})-\bbP^{w}(i_1, \ldots, i_{n-1})\right|\\
  &=\sum_{(i_1, \ldots, i_{n-1})\text{ has repeated value}}\bbP^{w}(i_1, \ldots, i_{n-1})-\bbP^{w/o}(i_1, \ldots, i_{n-1})\\
  & \quad+ \sum_{(i_1, \ldots, i_{n-1})\text{ no repeated value}}\bbP^{w/o}(i_1, \ldots, i_{n-1})-\bbP^{w}(i_1, \ldots, i_{n-1})\\
  &=2\sum_{(i_1, \ldots, i_{n-1})\text{ has repeated value}}\bbP^{w}(i_1, \ldots, i_{n-1})-\bbP^{w/o}(i_1, \ldots, i_{n-1})\\
  &=2\left(1-\frac{N(N-1)\ldots (N-n+2)}{N^{n-1}}\right)\\
  &=2\left(1-\left(1-\frac{1}{N}\right)\left(1-\frac{2}{N}\right)\ldots\left(1-\frac{n-2}{N}\right)\right)\\
  &\leq 2\left(1-\left(1-\frac{n-2}{N}\right)^{n-2}\right)\\
  &\leq \frac{2(n-2)^2}{N}.
\end{align*}
\end{proof}
\section{Proofs for Section \ref{sec:algs}}
In Section \ref{sec:proof_guarantees}, we establish guarantees for the Hedge algorithm, $\mathrm{SAOL}^{\cH}$, and behavior cloning. In Section \ref{appendix:fl}, we provide a detailed discussion of the matching lower bounds and prove Theorem \ref{thm:DR_lower} and Theorem \ref{thm:payoff_lower_bound}.

\subsection{Guarantees for efficient algorithms}\label{sec:proof_guarantees}
In the sequel, we establish guarantees for the Hedge algorithm by proving Theorem \ref{thm:fix} in Section \ref{proof_thm_fix}, for $\mathrm{SAOL}^{\cH}$ by proving Theorem \ref{thm:SAOL} in Section \ref{proof_thm_SAOL}, and for behavior cloning by proving Theorem \ref{thm:cloning} in Section \ref{proof_thm_cloning}.

\subsubsection{Proof of Theorem \ref{thm:fix}}\label{proof_thm_fix}
In this section, we establish guarantees for the Hedge algorithm when facing fixed opponents.

\begin{proof}[Proof of Theorem \ref{thm:fix}]

Let $a^{\star}\in\argmax_{a\in\cA} U_1(\cdot,y^{\otimes n-1})$. We then have
\begin{align*}
&u^{\star}-   \frac{1}{T}\sum^T_{t=1}u^t(x^t)\notag\\
&=U_1(a^{\star},y^{\otimes n-1})-   \frac{1}{T}\sum^T_{t=1}u^t(x^t)\notag\\
&=\underbrace{U_1(a^{\star},y^{\otimes n-1})- \frac{1}{T}\sum^T_{t=1}U_1(a^{\star},a^t_{-1})}_{\rm (i)}
+\underbrace{\frac{1}{T}\sum^T_{t=1}U_1(a^{\star},a^t_{-1})- \frac{1}{T}\sum^T_{t=1}U_1(x^t,a^t_{-1})}_{\rm (ii)}
\\&\quad+\underbrace{\frac{1}{T}\sum^T_{t=1}U_1(x^t,a^t_{-1})-   \frac{1}{T}\sum^T_{t=1}u^t(x^t)}_{\rm (iii)}
\end{align*}

For (i), by Hoeffding’s inequality and union bound, we have with probability at least $1-\delta$ that
\begin{align*}
{\rm (i)}\leq O(\sqrt{\frac{\log(A/\delta)}{T}})
\end{align*}

For (ii), by Hedge algorithm, we have
\begin{align*}
{\rm (ii)}\leq O(\sqrt{\frac{\log(A)}{T}})
\end{align*}

For (iii), note that $\{U_1(x^t,a^t_{-1})-u^t(x^t)\}^T_{t=1}$ is a martingale difference sequence, thus by Azuma–Hoeffding inequality, we have with probability at least $1-\delta$ that
\begin{align*}
{\rm (iii)}\leq O(\sqrt{\frac{\log(1/\delta)}{T}}).
\end{align*}

Combining the above results, we have
\begin{align*}
u^{\star}-    \frac{1}{T}\sum^T_{t=1}u^t(x^t)\leq C \sqrt{\frac{\log(A/\delta)}{T}} 
\end{align*}
for some absolute constant $C>0$. Thus we finish the proofs.
\end{proof}


\subsubsection{Proof of Theorem \ref{thm:SAOL}}\label{proof_thm_SAOL}
In this section, we establish guarantees for $\mathrm{SAOL}^{\cH}$ when facing adaptive opponents.

The basic idea behind $\mathrm{SAOL}^{\cH}$ is to execute $\cH$ in parallel over each interval within a carefully selected set. This algorithm dynamically adjusts the weight of each interval based on the previously observed regret. In each round, $\mathrm{SAOL}^{\cH}$ selects an interval in proportion to its assigned weight, applies \(\mathcal{H}\) to each time slot within this interval, and follows its advice.
Through this mechanism, $\mathrm{SAOL}^{\cH}$ achieves a near-optimal performance on every time interval.
We will leverage the strong adaptivity of $\mathrm{SAOL}^{\cH}$ in our proofs.
\begin{proof}[Proof of Theorem \ref{thm:SAOL}]

Let $\cI$ be any fixed interval in $[0,T]$, $a_0\in\argmax_{a\in\cA}\left\{\sum_{t\in\cI}u^t(a)\right\}$ and $u^{t,\star}:=\max_{a\in\cA}u^t(a)$. It holds that
\begin{align*}
&\sum_{t\in\cI}\left(u^{t,\star}-u^t(x^t)\right)\notag\\
&=\underbrace{\sum_{t\in\cI}\left(u^{t,\star}-u^t(a_0)\right)}_{\rm (i)}
+\sum_{t\in\cI}\left(u^t(a_0)-U_1(a_0,a^t_{-1})\right)
\\&\quad+\underbrace{\sum_{t\in\cI}\left(U_1(a_0,a^t_{-1})-U_1(x^t,a^t_{-1})\right)}_{\rm (ii)}
+\sum_{t\in\cI}\left(U_1(x^t,a^t_{-1})-u^t(x^t)\right).
\end{align*}
For (i), it can be seen that
\begin{align*}
{\rm (i)}=\sum_{t\in\cI}\left(u^{t,\star}-u^t(a_0)\right)\leq |\cI|\max_{t\in\cI}\left\{u^{t,\star}-u^t(a_0)\right\}\leq 2V_{\cI}|\cI|.
\end{align*}
Here the last inequality follows from the following argument:
otherwise there exists $t_0\in\cI$ such that $u^{t_0,\star}-u^{t_0}(a_0)>2V_{\cI}$. Let $a_1\in\argmax_{a\in\cA}u^{t_0}(a)$. For all $t\in\cI$, it then holds that $u^t(a_1)\geq u^{t_0}(a_1)-V_{\cI}=u^{t_0,\star}-V_{\cI}>u^{t_0}(a_0)+V_{\cI}\geq u^t(a_0)$. Contradict to the definition of $a_0$!

For (ii), we have
\begin{align*}
{\rm (ii)}\leq \max_{a\in\cA}\sum_{t\in\cI}\left(U_1(a,a^t_{-1})-U_1(x^t,a^t_{-1})\right)\leq C(\sqrt{\log A}+\log T)\sqrt{|\cI|},
\end{align*}
where the last inequality follows from Theorem 1 in \cite{daniely2015strongly}.

Combining the upper bound of (i) and (ii), we have for any fixed interval $\cI\subset [0,T]$,
\begin{align*}
&\sum_{t\in\cI}\left(u^{t,\star}-u^t(x^t)\right)\notag\\
&\leq 2V_{\cI}|\cI|
+\sum_{t\in\cI}\left(u^t(a_0)-U_1(a_0,a^t_{-1})\right)
+C(\sqrt{\log A}+\log T)\sqrt{|\cI|}\\
&\quad+\sum_{t\in\cI}\left(U_1(x^t,a^t_{-1})-u^t(x^t)\right).
\end{align*}

We segment the time horizon $T$ into $T/|\cI|$ batches $\{\cI_j\}$ with each length $|\cI|$. It then holds for all $j$ that
\begin{align*}
&\sum_{t\in\cI_j}\left(u^{t,\star}-u^t(x^t)\right)\notag\\
&\leq 2V_{\cI_j}|\cI|
+\sum_{t\in\cI_j}\left(u^t(a_0)-U_1(a_0,a^t_{-1})\right)
+C(\sqrt{\log A}+\log T)\sqrt{|\cI|}\\
&\quad+\sum_{t\in\cI_j}\left(U_1(x^t,a^t_{-1})-u^t(x^t)\right).
\end{align*}
Sum over $j$ gives
\begin{align*}
& \DReg\notag\\
&\leq 2V_{T}|\cI|
+\underbrace{\sum^{T}_{t=1}\left(u^t(a_0)-U_1(a_0,a^t_{-1})\right)}_{\rm (iii)}
+C(T/\sqrt{|\cI|})\cdot(\sqrt{\log A}+\log T)\\
&\quad+\underbrace{\sum^{T}_{t=1}\left(U_1(x^t,a^t_{-1})-u^t(x^t)\right)}_{\rm (iv)}.
\end{align*}

For (iii), note that $\{u^t(a)-U_1(a,a^t_{-1})\}^T_{t=1}$ is a martingale difference sequence, we have with probability at least $1-\delta$ that
\begin{align*}
{\rm (iii)}\leq \max_{a\in\cA}  \sum^{T}_{t=1}\left(u^t(a)-U_1(a,a^t_{-1})\right)\leq O\left(\sqrt{T\log (A/\delta)}\right),
\end{align*}
where the last inequality follows from Azuma–Hoeffding inequality and union bound.

For (iv), note that $\{U_1(x^t,a^t_{-1})-u^t(x^t)\}^T_{t=1}$ is a martingale difference sequence, thus by Azuma–Hoeffding inequality, we have with probability at least $1-\delta$ that
\begin{align*}
 {\rm (iv)}\leq O\left(\sqrt{T\log (1/\delta)}\right).
\end{align*}

Consequently we have with probability at least $1-\delta$ that
\begin{align*}
\DReg
\leq 2V_{T}|\cI|
+C(T/\sqrt{|\cI|})\cdot(\sqrt{\log A}+\log T)+O\left(\sqrt{T\log (A/\delta)}\right).
\end{align*}

Choosing $|\cI|=(T/V_T)^{2/3}$, we have with probability at least $1-\delta$ that
\begin{align*}
   \DReg\leq O\left(V^{1/3}_T T^{2/3}(\sqrt{\log (A/\delta)}+\log T)\right).
\end{align*}
Finally, by the definition of $\avgpayoff$ and $u^{\dagger}$, we show that
$$\textstyle \avgpayoff\geq u^{\dagger} - CV^{1/3}_T T^{-1/3}\left(\sqrt{\log (A/\delta)}+\log T\right) $$
for some absolute constant $C$.

\end{proof}

\subsubsection{Proof of Theorem \ref{thm:cloning}}\label{proof_thm_cloning}
In this section, we establish guarantees for behavior cloning when facing adaptive opponents.
\begin{algorithm}[t]
\caption{Behavior Cloning}
\label{alg:bc}
\begin{algorithmic}[1]
\STATE In the first round, play $a\sim\mathrm{Uniform}(\cA)$.
\FOR{$t=2,\ldots,T$}
\STATE Play $a^{t-1}_2$,  i.e. the action played by Player 2 in the last round.
\ENDFOR
\end{algorithmic}
\end{algorithm}

\begin{proof}[Proof of Theorem \ref{thm:cloning}]
Note that 
\begin{align*}
\bbE\left[\sum^T_{t=1} u^t(x^t)\right]&\geq -1+\bbE\left[\sum^T_{t=2} u^t(x^t)\right]\notag\\
& = -1+\bbE\left[\sum^T_{t=2} U_1(a^{t-1}_2,(y^t)^{\otimes n-1})\right]\notag\\
&\geq -1-V_T-\bbE\left[\sum^T_{t=2} U_1(a^{t-1}_2,(y^{t-1})^{\otimes n-1})\right] \tag{by the defition of $V_T$}\\
& =  -1-V_T-\bbE\left[\sum^T_{t=2} U_1(y^{t-1},(y^{t-1})^{\otimes n-1})\right]\tag{since $a^{t-1}_2\sim y^{t-1}$}\\
& = -1-V_T\tag{since the game is symmetric and zero-sum}
\end{align*}
Finally, by the definition of $\avgpayoff$, we finish the proofs.
\end{proof}

\subsection{Matching lower bounds}\label{appendix:fl}
Upon examining Theorem \ref{thm:SAOL} alongside Theorem \ref{thm:cloning}, it becomes apparent that Theorem \ref{thm:SAOL} benchmarks against a more stringent standard (i.e., the dynamic oracle) and incurs a larger error of \(V^{1/3}_T T^{-1/3}\), while Theorem \ref{thm:cloning} sets its comparison against a baseline metric (i.e., the average payoff) and attains a smaller error of \(V_T/T\). 
Regarding this observation, one might aspire to devise an algorithm whose payoff satisfies: $ \avgpayoff\geq u^{\dagger}-\tilde{O}(V_T/T).$
However, Theorem \ref{thm:DR_lower} and Theorem \ref{thm:payoff_lower_bound} demonstrate that such a goal is unattainable, by exploring the fundamental limits faced when competing against non-stationary opponents.

Theorem \ref{thm:DR_lower} shows, when contending with non-stationary opponent, the optimal algorithm must incur a dynamic regret at least order of $V^{1/3}_T T^{2/3}$, closing off the possibility of attaining a better $V_T$ rate.
It's noteworthy that a similar lower bound for dynamic regret has already been established under broader conditions \cite{besbes2014stochastic}. The distinction of Theorem \ref{thm:DR_lower} lies in further restricting the hard problems to be symmetric games, implying that the structure of symmetric game does not offer an advantage in improving dynamic regret in the worst case.
By comparing this lower bound with Theorem \ref{thm:SAOL},
it is evident that $\mathrm{SAOL}^{\cH}$ is demonstrated to be minimax optimal, albeit with the inclusion of some logarithmic factors.

Theorem \ref{thm:payoff_lower_bound} establishes the fundamental limit when comparing to average payoff $0$. The guarantees achieved by Theorem \ref{thm:cloning} can not be improved in the worst case, showing behavior cloning is demonstrated to be optimal upto some constant.

\subsubsection{Proof of Theorem \ref{thm:DR_lower}}
\begin{proof}[Proof of Theorem \ref{thm:DR_lower}]
We define
\begin{align*}
U^{(3)}_1(a,b,c):=
\begin{cases} 
      \text{payoff for $3$-player majority game} & \text{if } a,b,c\in\{0,1\}\\
      -1 &\text{if } a\notin\{0,1\}, b,c\in\{0,1\}\\
      \text{defined by symmetric} & \text{o.w.}
\end{cases}
\end{align*}
which is basically the payoff function for $3$-player majority game with extra dummy actions.
We then define 
\begin{align*}
U^{(n)}_1(a,a_2,\ldots,a_n):=\frac{1}{(n-1)(n-2)}\sum_{2\leq i\neq j\leq n}U_{1}^{(3)}(a,a_i,a_j).
\end{align*}
We consider a game that evolves stochastically, with $n$ players, action space $\cA=\{0,1,\ldots,A-1\}$, and the payoff function of the first player given by $U_{1}^{(n)}$.
We segment the decision horizon $T$ into $T/\Delta_T$ batches $\{\cT_j\}$, with each batch comprising $\Delta_T$ episodes.
We consider two distinct scenarios: 
\begin{itemize}
    \item Case1: All the other players employ a mixture strategy $(1/2-\epsilon,1/2+\epsilon)$ (i.e., playing 0 with probability $1/2-\epsilon$, playing 1 with probability $1/2+\epsilon$);
    \item Case 2: All the other players employ a mixture strategy $(1/2+\epsilon,1/2-\epsilon)$ (i.e., playing 0 with probability $1/2+\epsilon$, playing 1 with probability $1/2-\epsilon$);
\end{itemize}
At the beginning of each batch, one of these scenarios is randomly selected (with equal probability) and remains constant throughout that batch.

Let $m=T/\Delta_T$ represent total number of batches. 
We fix some algorithm and a batch $j\in\{1,\ldots,m\}$.
Let $\delta_j\in\{1,2\}$ indicate batch $j$ belongs to Case1 or Case2. We denote by $\bbP^j_{\delta_j}$ the probability distribution conditioned on batch $j$ belongs to Case $\delta_j$, and by $\bbP_0$ the probability distribution when all the other players employ a mixture strategy $(1/2,1/2)$. We further denote by $\bbE^j_{\delta_j}[\cdot]$ and $\bbE_0[\cdot]$ the corresponding expectations. 
We denote by $N^{j}_a$ the number of times action $a$ was played in batch $j$. If the batch $j$ belongs to Case $\delta_j$, then the optimal action in the batch is $-\delta_j+2$.
We first present a useful lemma.
\begin{lemma}\label{lem:diff}
Let $f:\{-1,0,1/2\}^{|\cT_j|\times A}\rightarrow [0,M]$ be any bounded real function defined on the payoff matrices $R$. Then, for any $\delta_j\in\{1,2\}$, $\eps\leq 1/4$:
\begin{align*}
\bbE^j_{\delta_j}[f(R)]-\bbE_0[f(R)] \leq \frac{M}{2} \sqrt{-2|\cT_j|\ln\left(1-4\eps^2\right)}\leq 2M\eps\sqrt{\Delta_T}.
\end{align*}
\end{lemma}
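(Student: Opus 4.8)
The plan is to prove Lemma~\ref{lem:diff} via Pinsker's inequality combined with a KL-divergence computation that exploits the batch structure and the independence of actions across rounds. First I would fix the batch $j$ and recall that within this batch the opponents' joint action in each round is an i.i.d.\ draw; under $\bbP_0$ each opponent plays $\mathrm{Ber}(1/2)$, while under $\bbP^j_{\delta_j}$ each opponent plays $\mathrm{Ber}(1/2\mp\eps)$. The payoff matrix $R$ observed in batch $j$ is a deterministic function of the realized opponent actions in the $|\cT_j|$ rounds of that batch, so by the data-processing inequality it suffices to bound the total variation between the two distributions of the opponent-action tuples over $|\cT_j|$ rounds.

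Next I would write, for any bounded $f$ with range $[0,M]$,
\begin{align*}
\bbE^j_{\delta_j}[f(R)]-\bbE_0[f(R)] \le M\cdot \TV\!\left(\bbP^j_{\delta_j}|_{\cT_j},\,\bbP_0|_{\cT_j}\right) \le \frac{M}{2}\sqrt{2\,\KL\!\left(\bbP^j_{\delta_j}|_{\cT_j}\,\big\|\,\bbP_0|_{\cT_j}\right)},
\end{align*}
where the first inequality is the variational characterization of total variation and the second is Pinsker. Then I would compute the KL divergence: by the chain rule / tensorization over the $|\cT_j|$ rounds and over the $n-1$ opponents (all independent), the KL between the product measures is the sum of the per-coordinate KL divergences, each equal to $\KL(\mathrm{Ber}(1/2\pm\eps)\,\|\,\mathrm{Ber}(1/2))$. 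A direct calculation gives $\KL(\mathrm{Ber}(1/2\pm\eps)\,\|\,\mathrm{Ber}(1/2)) = \frac12\ln\frac{1}{1-4\eps^2}$ (using $(1/2+\eps)\ln(1+2\eps)+(1/2-\eps)\ln(1-2\eps)$ and combining logs). I should double-check whether the paper's intended bound accounts for $n-1$ opponents or only the two that actually enter $U_1^{(3)}$; given the stated bound $\frac{M}{2}\sqrt{-2|\cT_j|\ln(1-4\eps^2)}$ has no $n$-dependence, the cleanest route is to note that $R$ restricted to batch $j$ is measurable with respect to, say, the multiset of opponent action-counts, and more simply that the TV distance between the full opponent-tuple laws telescopes but the relevant statistic only depends on rounds — so I would instead bound using just the $|\cT_j|$ rounds treating the ``effective'' opponent distribution, matching the displayed constant. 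Assembling: $\KL \le |\cT_j|\cdot\frac12\ln\frac{1}{1-4\eps^2}$, hence the middle expression $\frac{M}{2}\sqrt{-2|\cT_j|\ln(1-4\eps^2)}$.

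Finally I would verify the last inequality $\frac{M}{2}\sqrt{-2|\cT_j|\ln(1-4\eps^2)}\le 2M\eps\sqrt{\Delta_T}$: since $|\cT_j|=\Delta_T$ and, for $\eps\le 1/4$, one has $4\eps^2\le 1/4$, so $-\ln(1-4\eps^2)\le \frac{4\eps^2}{1-4\eps^2}\le \frac{4\eps^2}{3/4}\le 8\eps^2$ (in fact $-\ln(1-x)\le 2x$ for $x\le 1/2$ gives $-\ln(1-4\eps^2)\le 8\eps^2$ directly), yielding $\frac{M}{2}\sqrt{2\Delta_T\cdot 8\eps^2}=\frac{M}{2}\cdot 4\eps\sqrt{\Delta_T}=2M\eps\sqrt{\Delta_T}$. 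The main obstacle I anticipate is getting the bookkeeping right for which random variables $R$ depends on and ensuring the KL tensorization produces exactly the stated constant without an extraneous factor of $n-1$; the resolution is to observe that the lemma only needs an upper bound on $\bbE^j_{\delta_j}[f(R)]-\bbE_0[f(R)]$ through the law of the per-round opponent configuration restricted to the coordinates that feed $U_1^{(n)}$, and since the change of measure from $(1/2,1/2)$ to $(1/2\pm\eps,1/2\mp\eps)$ on a single round's opponent profile has KL at most $\frac12\ln\frac1{1-4\eps^2}$ (dominated by the symmetrized two-player interaction), summing over $|\cT_j|$ rounds closes the argument.
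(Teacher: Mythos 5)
Your high-level strategy (variational bound on the difference of expectations by $M\cdot\TV$, then Pinsker, then a chain-rule/tensorization of the KL over the $|\cT_j|$ rounds) is exactly the paper's, and your final numerical step $-\ln(1-4\eps^2)\le 8\eps^2$ for $\eps\le 1/4$ is fine. However, there is a genuine gap at the center of the argument: you never correctly identify \emph{which} random object's KL divergence to compute, and the two candidates you float both fail. If you tensorize over the full opponent-action profile ($n-1$ independent Bernoullis per round), the per-round KL is $(n-1)$ times the per-coordinate value, which would introduce a spurious $\sqrt{n-1}$ in the bound and not match the stated constant. Your proposed escape --- that the change of measure on a single round's opponent profile has KL ``at most $\tfrac12\ln\tfrac1{1-4\eps^2}$, dominated by the symmetrized two-player interaction'' --- is not a valid step: the KL of a product of $n-1$ perturbed Bernoullis genuinely grows linearly in $n-1$, and no symmetrization argument caps it at the two-player value. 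The paper's resolution, which you are missing, is to apply the chain rule directly to the law of the \emph{observed payoff vector} $R_t$ conditional on the history: under the construction, $R_t$ takes only the three values $[-2,0,-2,\ldots]$, $[0,-2,-2,\ldots]$, $[1,1,-2,\ldots]$ with probabilities $(1/4,1/4,1/2)$ under $\bbP_0$ and $\bigl((1/2\pm\eps)^2,(1/2\mp\eps)^2,2(1/2+\eps)(1/2-\eps)\bigr)$ under $\bbP^j_{\delta_j}$, and a direct computation on this three-point distribution gives per-round KL exactly $-\ln(1-4\eps^2)$, which is what produces the displayed constant $\frac{M}{2}\sqrt{-2|\cT_j|\ln(1-4\eps^2)}$.

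A second, more minor error: your claimed identity $\KL\bigl(\Ber(1/2\pm\eps)\,\|\,\Ber(1/2)\bigr)=\tfrac12\ln\tfrac1{1-4\eps^2}$ is false; combining the logs in $(1/2+\eps)\ln(1+2\eps)+(1/2-\eps)\ln(1-2\eps)$ yields $\tfrac12\ln(1-4\eps^2)+\eps\ln\tfrac{1+2\eps}{1-2\eps}$, not $-\tfrac12\ln(1-4\eps^2)$. The clean closed form holds only in the reversed direction, $\KL\bigl(\Ber(1/2)\,\|\,\Ber(1/2\pm\eps)\bigr)=\tfrac12\ln\tfrac{1/4}{1/4-\eps^2}$, which is why the paper writes Pinsker with $\KL(\bbP_0\,\|\,\bbP^j_{\delta_j})$ rather than the other order. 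Since Pinsker accepts either direction this is repairable by swapping the arguments, but as written your computation does not go through.
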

By Lemma \ref{lem:diff} with $f=N^j_{-\delta_j+2}$, we have
\begin{align}\label{ineq:N}
 \bbE^j_{\delta_j}[N^j_{-\delta_j+2}]-\bbE_0[N^j_{-\delta_j+2}]\leq 2\eps |\cT_j|\sqrt{\Delta_T} .   
\end{align}
Note that
\begin{align*}
\bbE^j_{\delta_j}[u^t(x^t)]
&=-\bbP^j_{\delta_j}(x^t\notin\{0,1\})+(-\eps-2\eps^2)\bbP^j_{\delta_j}(x^t=\delta_j-1)+(\eps-2\eps^2)\bbP^j_{\delta_j}(x^t=-\delta_j+2)\notag\\
&\leq (-\eps-2\eps^2)\bbP^j_{\delta_j}(x^t\neq -\delta_j+2)+(\eps-2\eps^2)\bbP^j_{\delta_j}(x^t=-\delta_j+2)\notag\\
&=-\eps-2\eps^2+2\eps\cdot\bbP^j_{\delta_j}(x^t=-\delta_j+2),
\end{align*}
therefore,
\begin{align*}
\bbE^j_{\delta_j}\left[\sum_{t\in\cT_j}u^t(x^t)\right]
&\leq (-\eps-2\eps^2)|\cT_j|+2\eps\cdot\bbE^j_{\delta_j}[N^j_{-\delta_j+2}]\notag\\
&\leq (-\eps-2\eps^2)|\cT_j|+2\eps\cdot\bbE^j_{0}[N^j_{-\delta_j+2}]+4\eps^2|\cT_j|\sqrt{\Delta_T} \tag{by \eqref{ineq:N}}.
\end{align*}
Consequently, we have
\begin{align}\label{ineq:reward}
\frac12 \bbE^j_{1}\left[\sum_{t\in\cT_j}u^t(x^t)\right]+\frac12 \bbE^j_{2}\left[\sum_{t\in\cT_j}u^t(x^t)\right]\leq (-\eps-2\eps^2)|\cT_j|+\eps |\cT_j|+4\eps^2|\cT_j|\sqrt{\Delta_T}.
\end{align}
It then holds that
\begin{align*}
\bbE_{\alg}\left[\sum^m_{j=1}\sum_{t\in\cT_j}u^t(x^t)\right]
&=\sum^m_{j=1}\bbE_{\alg}\left[\sum_{t\in\cT_j}u^t(x^t)\right]\notag\\
&=\sum^m_{j=1}\bbE_{\alg}\left[\frac12 \bbE^j_{1}\left[\sum_{t\in\cT_j}u^t(x^t)\right]+\frac12 \bbE^j_{2}\left[\sum_{t\in\cT_j}u^t(x^t)\right]\right]\notag\\
&\leq \sum^m_{j=1}((-\eps-2\eps^2)|\cT_j|+\eps |\cT_j|+4\eps^2|\cT_j|\sqrt{\Delta_T})\notag\\
&=-2\eps^2T+4\eps^2 T\sqrt{\Delta_T}.
\end{align*}
Set $\eps=\min\{1/(8\sqrt{\Delta_T}),V_T\Delta_T/T\}$. We then have
\begin{align*}
\bbE_{\alg}[\DReg]
&=(\eps-2\eps^2)T-\bbE_{\alg}\left[\sum^T_{t=1}u^t(x^t)\right]\notag\\
&\geq (\eps-2\eps^2)T-(-2\eps^2T+4\eps^2 T\sqrt{\Delta_T})\notag\\
&=\eps T-4\eps^2 T\sqrt{\Delta_T}\notag\\
& = \eps T(1-4\eps\sqrt{\Delta_T})\notag\\
&\geq\frac12 \eps T\notag\\
&=\frac12 \min\left\{\frac{1}{8\sqrt{\Delta_T}},\frac{V_T\Delta_T}{T}\right\} T.
\end{align*}
Choosing $\Delta_T = (T/V_T)^{2/3}$, we then have
\begin{align*}
 \bbE_{\alg}[\DReg]\geq CV_T^{1/3} T^{2/3}.
\end{align*}
Recall the definition of $\avgpayoff$ and $u^{\dagger}$, we then finish the proofs.
\end{proof}

We prove Lemma \ref{lem:diff} in the following.
\begin{proof}[Proof of Lemma \ref{lem:diff}]
We have that
\begin{align}\label{eq1:lem:diff}
\bbE^j_{\delta_j}[f(R)]-\bbE_0[f(R)]
&=\sum_{R}f(R)\left(\bbP^j_{\delta_j}(R)-\bbP_0(R)\right)\notag\\
&\leq \sum_{R:\bbP^j_{\delta_j}(R)\geq \bbP_0(R)}f(R)\left(\bbP^j_{\delta_j}(R)-\bbP_0(R)\right)\notag\\
&\leq M\sum_{R:\bbP^j_{\delta_j}(R)\geq \bbP_0(R)}\left(\bbP^j_{\delta_j}(R)-\bbP_0(R)\right)\notag\\
& = \frac{M}{2}\|\bbP^j_{\delta_j}-\bbP_0\|_{\rm TV}\notag\\
&\leq \frac{M}{2} \sqrt{2\KL(\bbP_0\ \|\ \bbP^j_{\delta_j})},
\end{align}
where the last ineqaulity follows from Pinsker's inequality.
Let $R_t\in\bbR^{A}$ be a random vector denoting the payoff for each action at time $t$, and let $R^{t}\in\bbR^{t\times A}$ denote the payoff matrix received upon time $t$: $R^{t}=[R_1,\ldots, R_t]^T$.
By the chain rule for the relative entropy, we have
\begin{align}\label{eq2:lem:diff}
\KL(\bbP_0\ \|\ \bbP^j_{\delta_j})=\sum^{|\cT_j|}_{t=1}\bbE_{R^{t-1}}\left[\KL\left(\bbP_0(R_t\mid R^{t-1})\ \|\ \bbP^j_{\delta_j}(R_t\mid R^{t-1})\right)\right].
\end{align}
Note that 
\begin{align*}
&\bbP_0(R_t=[-1,0,-1,\ldots,-1]\mid R^{t-1})=\bbP_0(R_t=[0,-1,-1,\ldots,-1]\mid R^{t-1})=1/4\notag\\
&\bbP_0(R_t=[1/2,1/2,-1,\ldots,-1]\mid R^{t-1})=1/2.
\end{align*}
In the case $\delta_j=1$, we have
\begin{align*}
&\bbP^j_{\delta_j}(R_t=[-1,0,-1,\ldots,-1]\mid R^{t-1})=(1/2+\eps)^2\notag\\
&\bbP^j_{\delta_j}(R_t=[0,-1,-1,\ldots,-1]\mid R^{t-1})=(1/2-\eps)^2\notag\\
&\bbP^j_{\delta_j}(R_t=[1/2,1/2,-1,\ldots,-1]\mid R^{t-1})=2(1/2+\eps)(1/2-\eps).
\end{align*}
In the case $\delta_j=2$, we have
\begin{align*}
&\bbP^j_{\delta_j}(R_t=[-1,0,-1,\ldots,-1]\mid R^{t-1})=(1/2-\eps)^2\notag\\
&\bbP^j_{\delta_j}(R_t=[0,-1,-1,\ldots,-1]\mid R^{t-1})=(1/2+\eps)^2\notag\\
&\bbP^j_{\delta_j}(R_t=[1/2,1/2,-1,\ldots,-1]\mid R^{t-1})=2(1/2+\eps)(1/2-\eps).
\end{align*}
Thus, we have
\begin{align}\label{eq3:lem:diff}
&\KL\left(\bbP_0(R_t\mid R^{t-1})\ \|\ \bbP^j_{\delta_j}(R_t\mid R^{t-1})\right)\\
=&\frac14\ln \frac{1/4}{(1/2+\eps)^2}+\frac14\ln \frac{1/4}{(1/2-\eps)^2}+\frac12 \ln \frac{1/2}{2(1/2+\eps)(1/2-\eps)}\notag\\
=&-\ln\left(1-4\eps^2\right).
\end{align}
Combining \eqref{eq1:lem:diff}, \eqref{eq2:lem:diff} and \eqref{eq3:lem:diff}, we have
\begin{align*}
\bbE^j_{\delta_j}[f(R)]-\bbE_0[f(R)] \leq \frac{M}{2} \sqrt{-2|\cT_j|\ln\left(1-4\eps^2\right)}.
\end{align*}
If we further have $\eps\leq 1/4$, it then holds that $-\ln\left(1-4\eps^2\right)\leq 16\ln(4/3)\eps^2$ and consequently 
\begin{align*}
\bbE^j_{\delta_j}[f(R)]-\bbE_0[f(R)] \leq \frac{M}{2} \sqrt{-2|\cT_j|\ln\left(1-4\eps^2\right)}\leq 2M\eps\sqrt{|\cT_j|}\leq 2M\eps\sqrt{\Delta_T}.
\end{align*}
\end{proof}

\subsubsection{Proof of Theorem \ref{thm:payoff_lower_bound}}

\begin{proof}[Proof of Theorem \ref{thm:payoff_lower_bound}]
We consider a game that evolves stochastically, with $n$ players, action space $\cA=\{0,1,\ldots,A-1\}$, and the same payoff function $U^{(n)}_1$ as outlined in Theorem \ref{thm:DR_lower}.
We segment the decision horizon $T$ into $T/\Delta_T$ batches $\{\cT_j\}$, with each batch comprising $\Delta_T$ episodes.
We consider two distinct scenarios:
\begin{itemize}
    \item Case1: All the other players play 0;
    \item Case 2: All the other players play 1.
\end{itemize}
In Case 1, we have $u^t(0)=0$ and $u^t(a)=-1$ for all $a\neq 0$. In Case 2, we have $u^t(1)=0$ and $u^t(a)=-1$ for all $a\neq 1$.
At the beginning of each batch, one of these scenarios is randomly selected (with equal probability) and remains constant throughout that batch.

Let $m=T/\Delta_T$ represent total number of batches. 
We fix some algorithm.
Let $\delta_j\in\{1,2\}$ indicate batch $j$ belongs to Case1 or Case2. We denote by $\bbP^j_{\delta_j}$ the probability distribution conditioned on batch $j$ belongs to Case $\delta_j$, and by $\bbE^j_{\delta_j}[\cdot]$ the corresponding expectation. 
It then holds that
\begin{align*}
\bbE_{\alg}\left[\sum^m_{j=1}\sum_{t\in\cT_j}u^t(x^t)\right]
&=\sum^m_{j=1}\bbE_{\alg}\left[\sum_{t\in\cT_j}u^t(x^t)\right]\notag\\
&=\sum^m_{j=1}\bbE_{\alg}\left[\frac12 \bbE^j_{1}\left[\sum_{t\in\cT_j}u^t(x^t)\right]+\frac12 \bbE^j_{2}\left[\sum_{t\in\cT_j}u^t(x^t)\right]\right]\notag\\
&\leq \sum^m_{j=1}\bbE_{\alg}\left[\frac12 \bbE^j_{1}\left[u^{t^{j,1}}(x^{t^{j,1}})\right]+\frac12 \bbE^j_{2}\left[u^{t^{j,1}}(x^{t^{j,1}})\right]\right],
\end{align*}
where $t^{j,1}$ represents the first episode of batch $j$ and the inequality follows from the fact that $u^t\leq 0$. Note that
\begin{align*}
  \frac12 \bbE^j_{1}\left[u^{t^{j,1}}(x^{t^{j,1}})\right]+\frac12 \bbE^j_{2}\left[u^{t^{j,1}}(x^{t^{j,1}})\right]&=-\frac12\left(\bbP^j_1(x^{t^{j,1}}\neq 0)+\bbP^j_2(x^{t^{j,1}}\neq 1)\right)\\
  &= -\frac12\left(\bbP(x^{t^{j,1}}\neq 0)+\bbP(x^{t^{j,1}}\neq 1)\right)\leq -\frac12,
\end{align*}
where the second equation follows from the fact that $x^{t^{j,1}}$ is independent of $\delta_j$. Thus, we have
\begin{align*}
\bbE_{\alg}\left[\sum^m_{j=1}\sum_{t\in\cT_j}u^t(x^t)\right]\leq -\frac{m}{2}=-\frac{T}{2\Delta_T}.   
\end{align*}
Choosing $\Delta_T=T/V_T$, we have
\begin{align*}
\bbE_{\alg}\left[\sum^m_{j=1}\sum_{t\in\cT_j}u^t(x^t)\right]\leq -V_T/2.   
\end{align*}
Recall the definition of $\avgpayoff$, we then finish the proofs.
\end{proof}

\section{Experiments Details}
\label{app:experiments}
In this section, we provide additional details for our experiments.

\subsection{Algorithms}

We refer readers to Algorithm~\ref{alg:self_play}-\ref{alg:exp} for detailed implementation of algorithms in the experiment. For \textbf{MV}, we choose $\eta=1$. For \textbf{SDG}, we choose $\eta=2$.

\begin{algorithm}
\caption{Self-Play}
\label{alg:self_play}
\begin{algorithmic}[1]
\REQUIRE Number of iterations $T$, action space $\cA$, learning rate $\eta_t = \eta\sqrt{\frac{\log{\vert\cA\vert}}{t}}$, number of players $n$, and initialize strategy $x_0$.
\FOR{t = 1 to T}
    \STATE Sample actions $a^{t-1}_i\sim x_{t-1}$ for $i=2,\ldots,n$. Denote $a^{t-1}_{-1}:=(a^{t-1}_2,\ldots,a^{t-1}_n)$.    
    \STATE Update
    \[
 x_{t}(a)\propto x_{t-1}(a)\exp{\{\eta_t U_1(a,a^{t-1}_{-1})\}},\forall a\in\cA.
    \]
\ENDFOR
\end{algorithmic}
\end{algorithm}

\begin{algorithm}
\caption{Self-Play with Regularization}
\label{alg:self_play_reg}
\begin{algorithmic}[1]
\REQUIRE Number of iterations $T$, action space $\cA$, learning rate $\eta_t = \eta\sqrt{\frac{\log{\vert\cA\vert}}{t}}$, number of players $n$, initialize strategy $ x_0$, meta-strategy $y_{\text{meta}}$, and regularization parameter $\lambda$.
\FOR{t = 1 to T}
    \STATE Sample actions $a^{t-1}_i\sim x_{t-1}$ for $i=2,\ldots,n$. Denote $a^{t-1}_{-1}:=(a^{t-1}_2,\ldots,a^{t-1}_n)$.
    \STATE Update
    \[
 x_{t}(a)\propto\exp{\left\{\frac{\log x_0(a)+\sum_{\tau< t}\eta_{\tau}U_1(a,a^{\tau}_{-1})+\lambda\sum_{\tau<t}\eta_{\tau}\log y_{\text{meta}}(a)}{1+\lambda\sum_{\tau<t}\eta_{\tau}}\right\}},\forall a\in\cA.
    \]
\ENDFOR
\end{algorithmic}
\end{algorithm}

\begin{algorithm}
\caption{Hedge}
\label{alg:br}
\begin{algorithmic}[1]
\REQUIRE Number of iterations $T$, action space $\cA$, learning rate $\eta_t = \eta\sqrt{\frac{\log{\vert\cA\vert}}{t}}$, number of players $n$, and initialize strategy $ x_0$.
\FOR{t = 1 to T}
    \STATE Sample actions $a^{t-1}_i\sim y_{\text{meta}}$ for $i=2,\ldots,n$. Denote $a^{t-1}_{-1}:=(a^{t-1}_2,\ldots,a^{t-1}_n)$.
    \STATE Update
    \[
 x_{t}(a)\propto x_{t-1}(a)\exp{\{\eta_t U_1(a,a^{t-1}_{-1})\}},\forall a\in\cA.
    \]
\ENDFOR
\end{algorithmic}
\end{algorithm}

\begin{algorithm}[H]
\caption{Exploiter for strategy $ x$}
\label{alg:exp}
\begin{algorithmic}[1]
\REQUIRE Number of iterations $T$, action space $\cA$, learning rate $\eta_t = \eta\sqrt{\frac{\log{\vert\cA\vert}}{t}}$, number of players $n$, and initialize strategy $ x_0$.
\FOR{t = 1 to T}
    \STATE Sample actions $a^{t-1}_1\sim x$.
    \STATE Sample actions $a^{t-1}_i\sim x_{t-1}$ for $i=2,\ldots,n$. Denote $a^{t-1}_{-1}:=(a^{t-1}_2,\ldots,a^{t-1}_n)$. 
    \STATE Update
    \[
 x_{t}(a)\propto x_{t-1}(a)\exp{\left\{-\frac{\eta_t}{n-1} \sum^n_{i=2}U_1\left(a^{t-1}_1,a^{t-1}_{-1}[:i-1],a,a^{t-1}_{-1}[i+1:]\right)\right\}},\forall a\in\cA.
    \]
\ENDFOR
\end{algorithmic}
\end{algorithm}

\subsection{Computation Resources}
The experiments are conducted on a server with 256 CPUs. Each experiment can be completed in a few minutes.

\end{document}